\theoremstyle{definition}
\newtheorem{definition}{Definition}
\newtheorem{lemma}{Lemma}
\DeclareMathOperator*{\argmax}{arg\,max}
\DeclareMathOperator*{\argmin}{arg\,min}
\newtheorem{remark}{Remark}
\newtheorem{corollary}{Corollary}
\newtheorem{proposition}{Proposition}
\newtheorem{theorem}{Theorem}
\newcommand{\myquad}[1][1]{\hspace*{#1em}\ignorespaces}
\newcommand*\circled[1]{\tikz[baseline=(char.base)]{
            \node[shape=circle,draw,inner sep=2pt] (char) {#1};}}
\title{Polynomial Time Reinforcement Learning in Factored State MDPs with Linear Value Functions\footnotetext[0]{This work appeared at AISTATS 2022. An early version also appeared at the Neurips 2021 Workshop on Ecological Theory of Reinforcement Learning.}}
\author{%
   Zihao Deng\thanks{Equal contribution, alphabetical ordering.} \\
   Department of Computer Science \& Engineering \\
   Washington University in St.\ Louis \\
   \texttt{zihao.deng@wustl.edu} \\
   \and
   Siddartha Devic\footnotemark[1] \thanks{Work performed while affiliated with UT Dallas and participating in an NSF REU at Washington University in St.\ Louis.} \\
   Department of Computer Science\\
   University of Southern California\\
   \texttt{devic@usc.edu} \\
   \and
   Brendan Juba\footnotemark[1]\\
   Department of Computer Science \& Engineering \\
   Washington University in St.\ Louis \\
   \texttt{bjuba@wustl.edu}
}
\begin{document}
\maketitle

\begin{abstract}
Many reinforcement learning (RL) environments in practice feature enormous state spaces that may be described compactly by a ``factored'' structure, that may be modeled by Factored Markov Decision Processes (FMDPs).
We present the first polynomial-time algorithm for RL in Factored State MDPs (generalizing FMDPs) that neither relies on an oracle planner nor requires a linear transition model; it only requires a linear value function with a suitable local basis with respect to the factorization, permitting efficient variable elimination.
With this assumption, we can solve this family of Factored State MDPs in polynomial time by constructing an efficient separation oracle for convex optimization.
Importantly, and in contrast to prior work on FMDPs, we do not assume that the transitions on various factors are conditionally independent.
\end{abstract}


\section{Introduction}
Many important application domains of Reinforcement learning (RL) -- such as resource allocation or complex games -- 
feature large state spaces, for which existing theoretical guarantees are unsatisfactory.
But, many of these domains are believed to be captured by a small \textit{dynamic Bayesian network} (DBN) on factored state variables.
Therefore, Factored MDPs (FMDPs) were introduced by \cite{og_factored_mdps} to take advantage of such \textit{a priori} knowledge about independence and the structure of the transition function.
Subsequently, efficient approximate FMDP planners were developed by \cite{guestrin}, and RL in FMDPs was considered by \cite{first_rl_fmdps} assuming access to an efficient FMDP planner.

More recently, \cite{osband} obtained \textit{near-optimal} RL regret bounds in FMDPs assuming access to a stronger planner which returns the optimistic solution to a \textit{family} of FMDPs. 
No polynomial-time algorithm for such a planner is known. Moreover, planning for a single FMDP is intractable \citep{mundhenk2000complexity,lusena2001nonapproximability}, and planning over a family is generally no easier.

Optimization for FMDP learning is difficult in part because when
the factored structure of the unknown transition probabilities are explicitly represented, the resulting problem is a polynomial optimization problem. Even quadratic optimization is NP-hard in general. 
We argue that when given a linear value function with factored structure, the independence of the transition components is unnecessary for obtaining a regret bound, and instead permit potentially correlated transitions on the state variables.
We propose a polynomial time algorithm for RL for this family of \textit{Factored State} MDPs (FSMDPs) with bounded-norm 
and factored
linear value functions, assuming an efficient variable elimination order for the induced cost network of the basis is given.\footnote{\label{footnote:variable_elim}%
We aren't learning a basis or solving for an elimination ordering; the basis and efficient elimination ordering are fixed in advance.
Indeed, moreover, we do not require an optimal elimination ordering, merely that the induced width is adequately small. So the approximation algorithms of
\cite{variable_elimination_1,variable_elimination_2,variable_elimination_3}, could suffice.}
We stress that our algorithm \textit{does not} use an oracle for planning.
\cite{kane2022computational} showed that
the general RL problem with linear value function approximation, in which one simply drops our assumption, is intractable. Thus, some assumption is necessary to obtain a polynomial-time algorithm. Recent works (discussed further below) obtained such algorithms by assuming a linear \emph{transition model}, which is restrictive. The conditions for variable elimination on the $V$ function basis, by contrast, are relatively benign and allow us to address some tasks in complex environments.
(see \textbf{Sec.}~\ref{sec:not-subsumed}, \textbf{Appx}.~\ref{appx:linear_V_and_linear_Q} for an extended discussion).



Our RL algorithm is based on UCRL-Factored \citep{osband},
which employs an oracle for an \textit{optimistic} planner over a family of FMDPs as a subroutine.
For general FMDPs, it is unclear whether such a planner with polynomial time and theoretical guarantees can exist.
We propose a theoretically grounded and efficient planner 
for 
FSMDPs
by modifying the \textit{imprecise} FMDP planner of \cite{delgado}. 
Our formulation has the reward functions $R$ and transition probabilities $P$ take unknown values from bounded convex sets centered on their empirical estimates.

Due to the conditional independence assumption on transition probabilities in FMDP DBNs, 
and $P$ being variables, the original imprecise FMDP planner formulation of \cite{delgado} inevitably leads to multi-linear programming, which in general is a difficult non-convex problem.
We circumvent this by: 1) removing the conditional independence assumption of the transition model -- hence factored \emph{state} MDPs -- and only computing estimates of the factored marginal transition probabilities which do not need to be consistent; 2) utilizing an \textit{optimistic} formulation as required by UCRL-Factored, which is easier to formulate and solve than the \textit{pessimistic} formulation of \cite{delgado}, which contained a difficult $\min \max$ constraint; and 3) constructing an efficient separation oracle for the program 
by applying the variable elimination procedure proposed by \cite{guestrin}. Note that our planning problem is a convex  program with an exponential number of constraints, 
which cannot simply be plugged into a standard LP solver to obtain a polynomial-time guarantee.

%


\subsection{Related Works}\label{related_works}
\cite{xu2020FMDP} improve UCRL-Factored for the non-episodic setting by discretizing the confidence sets but still require an oracle planner.
\cite{sra2020FMDP} derive an optimal \textit{minimax} regret bound for episodic RL in FMDPs, but utilize a subroutine \textsc{VI\_Optimism} which performs value iteration to find an optimal policy, iterating over all exponentially many states.
Importantly, our work builds on \cite{jaksch} and \cite{osband} by modifying the underlying structural assumptions to show that exact polynomial-time planning is indeed possible while retaining RL regret bounds similar to their oracle-efficient ones.

Beyond \cite{osband} we also assume that the optimal value function is linear w.r.t. a particular basis  of functions. Linear value functions and approximations have been well studied \citep{linear_value_1, linear_value_3, linear_value_2, osband_linear_v}. The bounds obtained in these works are polynomial in the number of states, however, and the algorithms do not scale to large MDPs that may still have compact FMDPs. \cite{weiszLowerBound} prove an exponential lower bound for linearly-realizable MDPs, however their construction requires an exponential sized action space. We instead assume a polynomial sized action space for tractable planning.

Imprecise MDPs were first introduced by \cite{MDPIP_introduction} to model transition functions that are imprecisely specified (i.e. could be any function within some convex transition set). Using techniques from \cite{guestrin}, \cite{delgado} proposed a \textit{pessimistic} planner for imprecise FMDPs but could not simultaneously guarantee correctness and efficiency. For the purpose of learning, we instead require (and thus construct) an \textit{optimistic} planner for a family of FMDPs with imprecise transition \textit{and} reward functions. Our setting is similar to the Bounded MDPs introduced by \cite{BoundedMDPs}, but with an exponential-sized state space, additional linear structure, and a less strict requirement on ``well-formed transition functions".

There is also a line of work on simultaneous FMDP structure and reinforcement learning \citep{structure_learning_1, structure_learning_2}. We instead assume that such structure is given as input in the RL problem.

Other assumptions for RL with large state-space such as low Bellman rank  \citep{jiang2017contextual}, Bellman Eluder \citep{jin2021bellman}, and bi-linear class \citep{du2021bilinear} are structural conditions that permit sample-efficient RL.
However, their algorithms all use the optimization algorithm OLIVE of \cite{jiang2017contextual}, which uses an optimistic planner that is not efficient in general.

Block MDPs \citep{du2019provably} permit a provably efficient planner, but are only solved efficiently when the number of blocks is small, i.e., there is essentially a small latent state space. Obviously, this substantially restricts the possible richness of the environment.
Computationally efficient algorithms were also obtained by
\cite{jordan_linear_rl} assuming linear transitions and rewards 
in RL with finite episodes, and by 
\cite{yang_linear_q} in the discounted setting with a linear transition model.
(Both show these assumptions imply the optimal $Q$-function is linear.)
Our work instead assumes a linear state value ($V$) function, which is not captured by linear transition models (\textbf{Sec}.~\ref{sec:not-subsumed}).
\cite{wang2020GeneralValue} instead focus on RL with general Q-function approximation, with bounds parameterized by the \textit{Eluder} dimension \citep{russo2013eluder}, which may be large in our setting (\textbf{Sec}.~\ref{sec:not-subsumed} again).

\section{Preliminaries}
Our work considers RL in a non-discounted, cumulative episodic reward setting introduced by \cite{cumulative_reward_rl}. 
Consequently, the value function may take different values at the same state at different points in the time horizon $\tau$. Therefore, any approach to RL in this setting must solve for a different value function at each time step. 

Let $M = (\mathcal{S}, \mathcal{A}, R^M, P^M, \tau, \rho)$ be a finite horizon MDP.
Each episode is a run of the MDP $M$ with the finite time horizon $\tau$.
$R^M: \mathcal{S} \times \mathcal{A} \to \mathbb{R}$ is a reward distribution from $(s,a)$ pairs, $P^M(s'|s,a)$ is the transition probability over $\mathcal{S}$ from $s \in \mathcal{S}, a \in \mathcal{A}$, and $\rho$ the initial distribution over $\mathcal{S}$.

A deterministic policy $\mu$ is a function mapping each state $s \in \mathcal{S}$ to an action $a \in \mathcal{A}$. 
For an MDP $M$ and policy $\mu$, we define a value function as:
$
    V_{\mu, \ell}^M(s) \coloneqq \mathbb{E}_{M, \mu} [ \sum_{\ell'=\ell}^\tau \overline{R}^M (s_{\ell'}, a_{\ell'}) \mid s_\ell=s ]
$ for each step $\ell = 1,\dots, \tau$, where $\overline{R}^M (s, a)$ is the expected reward for taking action $a$ in state $s$.
The subscripts of $\mathbb{E}$ denote that $a_\ell = \mu(s_\ell)$ and $s_{\ell+1} \sim P^M(\cdot | s_\ell, a_\ell)$ for each $\ell$. 
A policy $\mu$ is optimal if $V_{\mu, \ell}^M(s) = \max_{\mu'} V_{\mu', \ell}^M(s)$ for all $s \in \mathcal{S}$. 
Let $\mu^M$ denote an optimal policy for MDP $M$.
The RL agent interacts with some latent $M^*$ in the environment over episodes, where each episode begins at $t_k = (k-1)\tau + 1, k=1,2,...$. At time step $t$, the agent selects an action $a_t$, observes a scalar reward $r_t$, then transitions to $s_{t+1}$. 
Let $H_t = (s_1, a_1, r_1, \dots, s_{t-1}, a_{t-1}, r_{t-1})$ be the history of observed transitions prior to time $t$. 
An RL algorithm outputs
a sequence of functions $\{\pi_k \mid k=1, 2, \dots\}$, each mapping $H_{t_k}$ to a probability distribution $\pi_k (H_{t_k})$ over policies which the agent will employ in episode $k$. The regret incurred is defined as
$    \text{Regret}(T, \pi, M^*) \coloneqq \sum_{k=1}^{\lceil T/ \tau \rceil} \Delta_k
$
where $\Delta_k$ is the regret over the $k$th episode:
\begin{equation}
    \label{eq:regret_episode_k}\small
    \Delta_k \coloneqq 
    \mathbb{E}_{s \sim \rho} \left[
    V_{\mu^*, 1}^{M^*}(s) - V_{\mu_k, 1}^{M^*}(s) \right]
\end{equation}
with $\mu^* = \mu^{M^*}$, $\mu_k \sim \pi_k (H_{t_k})$. 

\subsection{Factored State MDPs and Structured Linear Value Functions}
We are interested in MDPs with possibly exponential sized state spaces but containing \textit{factored} structure.
\begin{definition}
\label{def:scope_variable}
Let $\mathcal{X} = \mathcal{X}_1 \times \cdots \times \mathcal{X}_n$. For any subset of indices $Z \subseteq [n]$, the \textit{scope operation} of a set is defined as $\mathcal{X}[Z] \coloneqq \underset{i \in Z}{\bigotimes} \mathcal{X}_i$. For any $x \in \mathcal{X}$ we can define the scoped variable $x[Z] \in \mathcal{X}[Z]$ to be the values of the variables $x_i \in \mathcal{X}_i$ with indices $i \in Z$. 
\end{definition}
For simplicity of notation we will also write $\mathcal{X} = \mathcal{S} \times \mathcal{A} = \mathcal{X}_1 \times \dots \times \mathcal{X}_n $ in RL, where the action space $\mathcal{A}$ has constant cardinality but $\mathcal{S}$ can be exponentially large.

We assume the transition function in the environment is defined as follows with respect to the scopes of state variables:
\begin{definition}\label{def:factored_transition_func}
A \emph{Factored State MDP} (FSMDP) is an MDP defined by a set of marginal transition probabilities $\mathcal{P} = \left\{ P_i( s'[Z^p_i] | s[\text{Pa}(Z^p_i)], a) \right\}_i$ such that the probability of transitioning to $s'[Z^p_i]$ is independent of state variables outside the scope $s[\text{Pa}(Z^p_i)] \subseteq \mathcal{S}$, i.e.,
where $\text{Pa}(Z^p_i) \subseteq [n]$ denotes the variables within $\mathcal{S}$ that $Z^p_i$ depends on in the transition. 
\end{definition}

We assume that the environment has the same reward structure as \cite{osband}:
\begin{definition}
\label{def:factored_reward_func}
The reward function class $\mathcal{R}$ is factored over
$\mathcal{S} \times \mathcal{A}$
with scopes $Z_1^R, \dots, Z_l^R \subseteq [n]$ iff for all $R\in\mathcal{R}, x\in \mathcal{X}$ there are functions $\{ R_i \in \mathcal{P}_{\mathcal{X}[Z_i^R], \mathbb{R}}^{C, \sigma}\}_{i=1}^l$ such that
    $\mathbb{E}[r] = \sum_{i=1}^l \mathbb{E}[r_i]$
where $r \sim R(x)$ is equal to $\sum_{i=1}^l r_i$ with each $r_i \sim R_i(x[Z_i^R])$ \textit{individually observed}. 
Here $\mathcal{P}_{\mathcal{X}[Z], \mathbb{R}}^{C, \sigma}$ denotes the set of functions mapping $\mathcal{X}[Z]$ to $\sigma$-subgaussian probability measures over the measure space $(\mathbb{R}, \mathcal{B}(\mathbb{R}))$ with mean in $[0, C]$ and Borel $\sigma$-algebra $\mathcal{B}(\mathbb{R})$.
\end{definition} 

For tractable learning, we assume that there is a factored linear value function class:
\begin{definition}
\label{def:factored_value} 
The value function class $\mathcal{V}$ is linear and \textit{factored} over $\mathcal{S} = \mathcal{S}_1 \times \cdots \times \mathcal{S}_m$ with scopes $Z_1^h, \dots, Z_\phi^h \subseteq [m]$ 
iff 
there exists a set basis functions $h_j:\mathcal{S}[Z_j^h] \mapsto \mathbb{R}, j \in [\phi]$,  
such that 
for any function 
$V \in \mathcal{V}$ 
we have
$ 
     V(s) = \sum_{j=1}^\phi w_j h_j(s[Z_j^h]) \text{ for all } s \in \mathcal{S}, 
$ 
for some weight vector $\mathbf{w} \in \mathbb{R}^{\phi}$. 
\end{definition}

We assume the true value functions at each step in an episode are linear and factored:
$V^{M^*}_{\mu^*, \ell} = \sum_{j=1}^\phi w_j^{*(\ell)} h_j(s[Z_j^h]), \text{ for all } s \in \mathcal{S}, \ell = 1,\dots, \tau$. 

We assume that the scopes $\{ Z^p_i \}_i$ and $\{Z^h_j\}_j$ are the same in our environment.
In that case,
the second term of the Bellman operator 
simplifies to the following with a factored linear $V$ (similar to \cite{koller}).
\begin{align}\small
    &\sum_{s' \in \mathcal{S}} P(s' | s, a) V(s') \nonumber \\
   =& 
   \sum_{s' \in \mathcal{S}} P(s' | s, a) \sum_{j=1}^\phi w_j h_j (s'[Z_j^h])
   \nonumber \\
   =& \sum_{j=1}^\phi w_j \sum_{\mathclap{\hat{s}' \in \text{Val}(Z_j^h)}}  h_j(\hat{s}') 
   \sum_{\mathclap{ \bar{s}' \in \text{Val}(\overline{Z}_j^h)}}P(\hat{s}', \bar{s}' | s, a) \nonumber \\
   =& \sum_{j=1}^\phi w_j \sum_{\mathclap{\hat{s}' \in \text{Val}(Z_j^h)}}  h_j(\hat{s}') P(\hat{s}'| s, a) \nonumber \\
   =& \sum_{j=1}^\phi w_j \sum_{\mathclap{\hat{s}' \in \text{Val}(Z_j^h)}}  h_j(\hat{s}') P_j(\hat{s}'| s[\text{Pa}(Z_j^h)], a). \label{eq:backprojection}
\end{align}
Val$(Z_j^h)$ is the \textit{set} of all assignments to state variables in $Z_j^h$, and Val$(\overline{Z}_j^h)$ to variables $\notin Z_j^h$, e.g., if the scope $\mathcal{S}[Z_j^h]$ has three binary state variables, then we would have Val$(Z_j^h) = \{0, 1\}^3$.
Here we split $s'$ into $\hat{s}'$ and $\bar{s}'$.
We first marginalized out $\bar{s}' \in \text{Val}(\overline{Z}_j^h)$, the parts of the state $s$ which are outside of the scope of the $j$th basis function (\textbf{Def}.~\ref{def:factored_value}), then condition $P$ on the variables of $s$ that occur in the parents w.r.t.\ $a \in \mathcal{A}$ of the scope $Z_j^h$.
Thus we only need to keep track of the marginal probabilities $P_j( s[Z^h_j] | s'[\text{Pa}(Z^h_j)], a)$ instead of $P(s| s', a)$ as in standard MDPs. 



The number of distinct marginals required to represent the FSMDP is bounded:
\begin{align}
\label{eq:poly_big_oh}\small
    |\mathcal{P}| &\leq \sum_{j=1}^\phi |\mathcal{A}| |\text{Val}(\mathcal{S}_k)|^{|\text{Pa}(Z_j^h)|}\\
    &\leq O(\poly(m) |\text{Val}(\mathcal{S}_k)|^{\zeta} \phi)
\end{align}
where $\zeta \geq |\text{Pa}(Z_j^h)|$ is a scope size bound, and $|\text{Val}(\mathcal{S}_k)|$ denotes the number of values a state variable can take. 

\begin{remark}
Our setting captures interesting environments. Consider for example a gridworld, in which there is a penalty for colliding with other, randomly moving objects. If there is a safe policy, the optimal $V\equiv 0$ (and is thus linear), and the local movement ensures a compact DBN. Yet, the presence/absence of objects is \emph{not} independent across positions.
Indeed, since the location of objects in the gridworld is mutually exclusive over the grid, the factors are negatively correlated. Therefore such environments cannot be captured by the usual FMDPs (with independent factors) but they are captured by FSMDPs.
\end{remark}
\begin{remark}
FSMDPs subsume regular FMDPs in the linear value function case, as our transition marginals can express conditionally-independent \textit{and} non-conditionally-independent transition functions.
\end{remark}

\section{Linear $V$ Functions Were Not Previously Addressed}
\label{sec:not-subsumed}

We stress that we learn MDPs that weren't addressed by prior work.
As we discussed in  \textbf{Sec}.~\ref{related_works} recent literature has mostly focused on \emph{sample efficiency} in RL problems with large state space (whose regret bound does not depend on the state space). However, they usually involve a planner that is potentially intractable. 
Other than Block MDPs, whose difference from our problem class is clearer, 
linear transition function is a common assumption that permits polynomial \emph{time complexity} in large state space  \citep{jordan_linear_rl, yang_linear_q}. 
It's obvious that 
this highly restricts the learnable environments.
Indeed, even if the $V$ function is linear, the transition function can be nonlinear (please see \textbf{Appx.~}\ref{appx:linear_V_and_linear_Q} for details).

\begin{proposition}
\label{prop:V_vs_Q_in_main_paper}
Let a state-action ($Q$-function) basis $\{h_1(s,a), .., h_\phi(s,a)\}$ be given such that $\phi < N=2^m$. Then there is an MDP family $\mathcal{M}$ on $N$ states ($m$ binary factors) for which the optimal $Q$-function cannot be expressed as a linear combination of these basis functions with high probability ($1-2^{-N+\phi} \geq 1/2$) for any MDP $M \in \mathcal{M}$, whereas every MDP $M \in \mathcal{M}$ has a compact, optimal linear $V$ function representation for \textit{any} given basis set of state feature functions.
\end{proposition}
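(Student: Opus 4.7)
The plan is to exhibit an explicit MDP family $\mathcal{M}=\{M_b:b\in\{0,1\}^N\}$ with $N=2^m$, designed so that the optimal value function is identically zero (hence trivially linear in any $V$-basis) regardless of $b$, yet the optimal $Q$-function varies with $b$ richly enough that only a vanishing fraction of the $b$'s produce a $Q^*$ in the span of the given $\phi$-dimensional basis $\{h_j\}_{j=1}^\phi$. Concretely, each $M_b$ has state space $\mathcal{S}=\{0,1\}^m$, two actions $\mathcal{A}=\{a_0,a_1\}$, a horizon of $\tau=1$ (followed by an absorbing zero-reward terminal), and rewards $r(s,a_0)=0$ and $r(s,a_1)=-b_s$. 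A direct computation gives $Q^*_b(s,a_0)=0$, $Q^*_b(s,a_1)=-b_s$, and $V^*_b(s)=\max_a Q^*_b(s,a)=0$. Because the zero function is realized by any basis with all-zero weights, every $M_b\in\mathcal{M}$ satisfies the second claim independently of the state-feature basis chosen.

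For the first claim, form the $N\times\phi$ evaluation matrices $H^{(a)}_{s,j}=h_j(s,a)$ for $a\in\{a_0,a_1\}$. A weight vector $w\in\mathbb{R}^\phi$ realizes $Q^*_b=\sum_j w_j h_j$ iff $H^{(0)}w=0$ and $H^{(1)}w=-b$; hence the set of $b$'s whose $Q^*_b$ lies in the span is exactly $\{b:-b\in U\}$, where $U \coloneqq H^{(1)}(\ker H^{(0)})\subseteq\mathbb{R}^N$ is a linear subspace of dimension at most $\phi$.

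The combinatorial heart of the argument is the bound $|U\cap\{0,-1\}^N|\leq 2^{\dim U}$: choose $\dim U$ coordinates on which the projection $U\to\mathbb{R}^{\dim U}$ is injective (such coordinates exist because any basis of $U$, written as a $\dim U \times N$ matrix, has $\dim U$ linearly independent columns), and observe that each binary vector in $U$ is then uniquely determined by its values on those coordinates. Applying this bound with $\dim U \leq \phi$ shows that at most $2^\phi$ out of the $2^N$ MDPs in $\mathcal{M}$ admit a linearly representable $Q^*$; so if $M\in\mathcal{M}$ is drawn uniformly at random, the probability that its $Q^*$ is \emph{not} in the basis span is at least $1-2^{\phi-N}=1-2^{-(N-\phi)}\geq 1/2$ whenever $\phi<N$.

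The routine pieces are the $Q^*$/$V^*$ calculations for this one-step MDP and the translation of ``linearly representable'' into the linear system above; the only nontrivial step is the subspace--hypercube counting bound, which I expect to be the cleanest piece to state precisely. Finally, the factored structure of the state space plays no role beyond indexing the $N=2^m$ states by $\{0,1\}^m$, so no extra work is needed to conform to the factored framing of the proposition.
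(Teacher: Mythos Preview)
Your proposal is correct and follows essentially the same approach as the paper: construct a horizon-$1$ family with two actions where the ``safe'' action yields reward $0$ (so $V^*\equiv 0$ is trivially linear in any basis) and the ``bad'' action has an independently randomized reward at each state, then use a rank/dimension count to show that at most $2^\phi$ of the $2^N$ reward patterns can lie in the span. The differences are cosmetic---the paper uses rewards in $\{-1,-1/2\}$ rather than $\{0,-1\}$ and argues via a maximal linearly independent column set rather than your cleaner subspace--hypercube intersection bound (and it does not bother to enforce the $a_0$ constraint, which only loosens the count)---but the construction and the counting idea are the same.
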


\cite{jordan_linear_rl, yang_linear_q} proved that linear transition functions imply linear $Q$ functions. So, contrapositively: 
\begin{corollary}
There exists an MDP with a linear $V$ function but not a linear transition function. 
\end{corollary}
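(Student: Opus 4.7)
The plan is to chain Proposition~\ref{prop:V_vs_Q_in_main_paper} with the cited implication ``linear transitions $\Rightarrow$ linear $Q$-function'' of \cite{jordan_linear_rl, yang_linear_q} via contrapositive. First I would invoke Proposition~\ref{prop:V_vs_Q_in_main_paper} to exhibit, for any fixed state-action basis $\{h_1(s,a),\dots,h_\phi(s,a)\}$ with $\phi<2^m$, an MDP $M$ on $2^m$ states whose optimal $V$-function admits a compact linear representation in some state-only basis, but whose optimal $Q$-function is \emph{not} expressible as a linear combination of $\{h_i\}$. In particular, taking $\phi$ to be any polynomial (or even sub-exponential) bound in $m$ yields an $M$ whose optimal $Q$-function is not linear in any polynomially-sized basis.

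Next I would recall the structural result of \cite{jordan_linear_rl, yang_linear_q}: whenever the transition model admits a linear representation in a polynomially-sized feature map $\psi(s,a)$, the optimal $Q$-function is itself linear in $\psi$ (in the finite-horizon or discounted setting considered there). Taking the contrapositive, if no polynomially-sized feature basis renders $Q^*$ linear, then the transition kernel cannot admit a polynomial linear representation either.

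Combining the two steps, the MDP produced by Proposition~\ref{prop:V_vs_Q_in_main_paper} must have a transition function that is \emph{not} linear, while still admitting a linear optimal $V$-function. The main obstacle I anticipate is a bookkeeping one: making sure that the quantifier over ``basis'' in Proposition~\ref{prop:V_vs_Q_in_main_paper} (which picks a bad MDP for each given basis of size $<2^m$) is strong enough to rule out \emph{all} polynomial-sized linear transition representations simultaneously, since a linear transition model induces a specific $\psi$ of its own. I would handle this by noting that the construction of Proposition~\ref{prop:V_vs_Q_in_main_paper} succeeds with probability $1-2^{-N+\phi}\ge 1/2$, so a standard union bound over a family of candidate bases whose union is still of sub-exponential cardinality leaves a positive-probability event on which $Q^*$ fails to be linear in every such basis, producing the required MDP and completing the corollary.
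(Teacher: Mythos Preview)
Your contrapositive strategy is exactly the paper's: its entire proof is the sentence ``\cite{jordan_linear_rl, yang_linear_q} proved that linear transition functions imply linear $Q$ functions. So, contrapositively:'' combined with Proposition~\ref{prop:V_vs_Q_in_main_paper}. The paper does not address the quantifier issue you flag, so on the core argument you already match it.

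However, your proposed union-bound resolution of that issue does not work as stated. The collection of real-valued feature maps $\psi:\mathcal{S}\times\mathcal{A}\to\mathbb{R}^d$ is uncountable, and a linear-MDP representation is free to pick its own $\psi$ \emph{after} the MDP is fixed; union-bounding over any prespecified sub-exponential family of bases therefore does not preclude some \emph{other} $\psi$ from linearizing the transitions, so the conclusion you need (``$Q^*$ is linear in no small basis'') does not follow. If you really want the strong reading, it is cleaner to argue directly on the transition kernel: in the construction of Proposition~\ref{prop:V_vs_Q_in_main_paper}, the transition matrix for action $a_2$ has rank equal to the number of distinct destinations $j(i)$, which is $\Theta(N)$ with high probability, so no $o(N)$-dimensional feature map can factor it. Under the weaker per-basis reading the paper appears to intend (mirroring the per-basis quantification of Proposition~\ref{prop:V_vs_Q_in_main_paper}), the contrapositive is immediate and no union bound is required.
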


Moreover, in addition to not having a nice linear form, the Q-function in our example can also have a high Eluder dimension, since the MDP is a random environment when one of the unsafe actions is chosen---specifically, fixing a sequence of actions is not informative about the effect of subsequent actions until/unless the process revisits a state, which is unlikely in our exponential state space. Indeed, \citet{russo2013eluder} gave lower bounds on the Eluder dimension that carry over to our example.



\section{Algorithm}
\label{sec:algorithm_section}

Our proposed algorithm modifies UCRL-Factored \citep{osband}, keeping track of confidence sets around each $R^M_i$ and marginal distribution $P^M(\cdot | s[\text{Pa}(Z_j^h)], a)$, where the true $R^{M^*}, P^{M^*}$ reside w.h.p. We use the definition of \cite{osband}:
The confidence set at time $t$ is centered at an empirical estimate $\hat{f}_t \in \mathcal{M}_{\mathcal{X}, \mathcal{Y}}$ defined by
$
    \hat{f}_t(x) = \frac{1}{n_t(x)} \sum_{\tau < t:x_\tau = x} \delta_{y_\tau},
$
where $n_t(x)$ counts the number of occurrences of $x$ in $(x_1, \dots, x_{t-1})$ and $\delta_{y_t}$ is the probability mass function over $\mathcal{Y}$ which assigns all probability to outcome $y_t$.
Our sequence of confidence sets depends on a choice of norm $|| \cdot ||$ and a non-decreasing sequence $\{d_t : t \in \mathbb{N} \}$. For each $t$, the confidence set $\mathcal{F}_t = \mathcal{F}_t ( || \cdot ||, x_1^{t-1}, d_t)$ is defined as:
\begin{equation*}
\small
    \bigg\{ 
        f \in \mathcal{F} \bigg|
        ||(f - \hat{f}_t)(x_i)|| \leq \sqrt{\frac{d_t}{n_t(x_i)}}\ \forall i \in [t-1]
    \bigg\}.
\end{equation*}

\begin{algorithm}[tb]
\caption{UCRL-Factored for FSMDP}
\label{algo:ucrl_factored}
\begin{algorithmic}
\FOR{episode $k=1\dots K$}
    \STATE $d_t^{R_i} = 4\sigma^2 \log(4l|\mathcal{X}[Z_i^R]|k / \delta)$ for $i=1\dots l$ 
    \STATE $d_{t_k}^{P_j} = 2 |\text{Val}(Z_j^h)|\log(2)$
    $- 2 \log(\delta / (2 N |\text{Pa}[Z_j^h]| k^2 ))$ for $j=1 \dots N$ 
    \STATE $\mathcal{M}_k = \{ M | \overline{R}_i \in \mathcal{R}_t^i(d_t^{R_i}), P_j \in \mathcal{P}_t^{j}(d_t^{P_{j}})\ \forall i, j \}$ 
    \STATE $\mu_k = \operatorname{OptimisticPlanner}(\mathcal{M}_k, \epsilon=\sqrt{1/k})$ 
    \STATE sample initial state variables $s_1^1, \dots, s_1^m$ 
    \FOR{timestep $t = 1\dots \tau$}
    \STATE sample and apply $a_t = \mu_k(s_t)$ 
    \STATE    observe $r_t^1, \dots, r_t^l$ and $s_{t+1}^1, \dots, s_{t+1}^m$ 
    
    \ENDFOR
\ENDFOR
\end{algorithmic}
\end{algorithm}


We write $\mathcal{R}_t^i(d_t^{R_i})$ as shorthand for the reward confidence set $\mathcal{R}_t^i(|\mathbb{E}[\cdot]|, x_1^{t-1}[Z_i^R], d_t^{R_j})$ and 
$\mathcal{P}_t^{j}(d_t^{P_{j}})$  for a vector of confidence sets
$\mathcal{P}_t^{j,a}(\| \cdot \|_1, (s_1^{t-1}[\text{Pa}(Z_j^h)], a_{1}^{t-1}), d_t^{P_{j, a}})$,
over ($j$th marginal, action $a$) pairs.

Let $N = |\mathcal{P}|$ be the number of transition function marginals in \eqref{eq:poly_big_oh}. \textbf{Alg.}~\ref{algo:ucrl_factored} gives our full RL algorithm which modifies UCRL-Factored by changing the number and choice of confidence set sequences, and using the $\operatorname{OptimisticPlanner}$ we propose 
instead of an oracle.

\begin{algorithm}[tb]
\caption{$\operatorname{OptimisticPlanner}$}
\label{algo:opt_planner}
\begin{algorithmic}
\STATE $\mathbf{w} = \mathbf{w}_0$ \hfill // centroid of the initial large ellipsoid 
\STATE $M_R \leftarrow$ optimistic rewards $\overline{R}_i(z)$ with \eqref{eq:reward_fn_opt}
\STATE $M_P \leftarrow$ optimistic transition marginals (\textbf{Alg.~}\ref{algo:trans_fn_opt}, \textbf{Apx.}~\ref{appx:variable_elimination})
\STATE $\Omega \leftarrow$ Simplify constraints of \eqref{fig:lp_factored} with variable elimination \textbf{Alg.~}\ref{algo:simplify_oracle_obj} and computed $M_R, M_P$ 
    \WHILE{$\mathbf{w}$ does not satisfy constraints $\Omega$}
    \STATE Use tightness to construct cutting-plane (\textbf{Thm.}~\ref{them:oracle_poly})
    \STATE $\mathbf{w}$ $\leftarrow$ new ellipsoid centroid within cutting-plane
    \STATE $\Omega$ = Simplify constraints of \eqref{fig:lp_factored} with variable elimination \textbf{Alg.~}\ref{algo:simplify_oracle_obj} and computed $M_R, M_P$
    \ENDWHILE
\end{algorithmic}
\end{algorithm}

We formulate our MDP planning task as an LP solving for the optimal value function $V^*(s)$ over each state $s$.
By using the fact that $V_\ell$ and $V_{\ell+1}$ are related through the Bellman operator $V_\ell(s) = \max_a \{ R(s,a) + \sum_{s'} P(s'|s,a) V_{\ell+1}(s')\}$,
and inductively 
applying the tightness of the LP at its optimum,  
we can show that planning with multiple $V_i$'s is equivalent to the following linear programming problem 
(Please see \textbf{Appx.}~\ref{appx:planning_extension} for details):
\begin{align}\label{eq:multi_V_LP}\small
& \min_{V_1} \sum_{s} V_1 (s) 
\\
s.t. \ \ \ &
V_{\ell}(s) \geq R(s, a) + \sum_{s'} P(s' | s, a) V_{\ell+1} (s'), \\ \nonumber 
& \forall s \in \mathcal{S}, a \in \mathcal{A}, \ \ \ \ell = 1, \dots, \tau, 
\nonumber \\ 
& V_{\tau+1} (s) = 0, \ \ \  \forall s \in \mathcal{S}.  \nonumber
\end{align}

\begin{remark}
We stress that in contrast to prior works, we are not using value iteration, but rather solving a convex program for the $V$ function. Therefore, we don't run into the problem of whether or not the iterates of Bellman operator remain close to the subspace spanned by the basis functions. 
\end{remark}

The seminal work by \cite{guestrin} showed that the
{
Approximate Linear Programming
}
formulation for planning in an FMDP gives the optimal value function $V^*$ iff $V^*$ lies within the subspace spanned by the chosen basis. 
Using the linear value function assumption, each of the inequality constraints can be written in the following form, where $w^{(\ell)}_j$ denotes the coefficient of the basis function $h_j$ in the linear representation of $V_\ell$.  
\begin{equation}\label{eq:guestrin_ALP}\small
\sum_{j=0}^\phi\! w^{(\ell)}_j h_j(s) \geq R(s,a) \!+\!\! \sum_{s'\in \mathcal{S}}\! P(s'|s,a) \!\sum_{j=0}^\phi\! w^{(\ell+1)}_j h_j(s').
\end{equation}
We also include a constant basis function $h_0$ to ensure the LP is feasible \citep{guestrin}. 

\begin{figure*}
\centering
\begin{align*}
&\forall (s, a, \ell) \in  \mathcal{S}\times \mathcal{A}\times [\tau]\  \sum_{j=0}^\phi w^{(\ell)}_j h_j(s) 
\geq \sum_{i=1}^l \overline{R}_i(s,a) 
+  \sum_{j=0}^\phi \sum_{\hat{s}' \in \text{Val}(Z_j^h)} \!\!\!\!\!\!
    w^{(\ell+1)}_j h_j(\hat{s}') 
    P^{(\ell+1)}_j(\hat{s}'| s[\text{Pa}(Z_j^h)], a)
    \ \  (w_j^{(\tau+1)}\!=0)\\
& (\overline{R}_i)_{i=1}^{l}, (P^{(\ell+1)}_j(\cdot | s[\text{Pa}(Z_j^h)], a) )_{j=1}^{\phi}
= \underset{\mathclap{
                \substack{
                \tilde{\overline{R}}_i \in \mathcal{R}_t^i(d_t^{R_i}) \\
                \tilde{P}_j(\cdot | s[\text{Pa}(Z_j^h)], a) \in \mathcal{P}_t^{j}(d_t^{P_{j}})
                }}
           }{\argmax}
\ \ \ 
\sum_{i=1}^l \tilde{\overline{R}}_i(s,a)
 + \sum_{j=0}^\phi \sum_{\hat{s}' \in \text{Val}(Z_j^h)} \!\!\!\!\!\!
    w^{(\ell+1)}_j h_j(\hat{s}') 
    \tilde{P}^{(\ell+1)}_j(\hat{s}'| s[\text{Pa}(Z_j^h)], a)
\end{align*}
\caption{Constraints for the OptimisticPlanner optimization problem. The objective is $\underset{w}{\min} \sum_{s} \sum_{j=0}^\phi w^{(1)}_j h_j(s)$.
} 
\label{fig:lp_factored}
\end{figure*}

In \textbf{Alg.}~\ref{algo:ucrl_factored}, the reward distribution and transition functions are learned by successively updating the corresponding confidence sets for each reward component function $\mathcal{R}_t^i(d_t^{R_i})$ and transition function marginal $\mathcal{P}_t^{j}(d_t^{P_{j}})$.
Combining the formulations of \cite{guestrin} and \cite{delgado}, we obtain the imprecise LP formulation \textbf{Fig}.~\ref{fig:lp_factored} for FSMDP, where $R$ and $P$ are defined over bounded convex sets centered on an empirical estimate of the reward and transition functions (see \textbf{Appx}~\ref{appx:opt_problem_formulation}).
The $\argmax$ in \textbf{Fig}.~\ref{fig:lp_factored} specifies an \textit{optimistic} solution, which guarantees that the reward and transition function are set to the best possible value within their respective confidence sets. In this formulation, the variables are: the linear weights $w$, rewards $R$, and transition probabilities $P$.



Although \textbf{Fig}.~\ref{fig:lp_factored} is presented as a non-trivial bilevel program, we argue that we can construct an efficient separation oracle to solve it with 
an algorithm such as the Ellipsoid method \citep{ellipsoid_book} (or a more efficient equivalent \citep{faster_cutting_plane})
in polynomial time. 
We accomplish this by removing the bilevel constraints and adding a polynomial number of linear constraints describing all possible variations of $R$ and $P$ within their confidence sets: 
while the product of $w$ and $P$ seemed to introduce nonlinear terms in the formulation, we treat the possible values of $P$ as a family of constraints. Indeed, the $\argmax$'s for $R$, $P$ of \eqref{eq:oracle_obj} are the largest of the RHS for the family of constraints we generate in \textbf{Fig}.~\ref{fig:lp_factored}, so the two programs are equivalent.
This reduces the problem to an LP over the exponential sized state space---importantly, since we no longer seek to represent a factorization of $P$, we are able to avoid the terms $P( s[Z^h_j] | s[\text{Pa}(Z_j^h)], a)  = \prod_{i \in Z^h_j} P( s_i | x[Z^h_j])$. 
which exist in \cite{delgado}.
Our problem is thus linear rather than multi-linear. 

In an Ellipsoid based algorithm, at each step we fix some $w$, and use a \emph{provably efficient} algorithm implementing a ``separation oracle'' that either identifies the feasibility of the LP with the given $w$ or finds a violated constraint. If it is infeasible, then we find a new $w$ satisfying the additional constraint, and so on.

We note that the arg max computation (which also appears in \eqref{eq:oracle_obj}) for $P_j$ depends (only) on $\{\text{sign}(w^{(\ell)}_j )\}_j$. We further relax each $P_j$ to a set $\{P^{(\ell)}_j\}_{\ell\in [\tau]}$, one for each step. 
All $P^{(\ell)}_j$ are still constrained by the \emph{single} confidence set $\mathcal{P}_t^{j}(d_t^{P_{j}})$ for each episode.
Maximizing these separately yields a (more)
optimistic estimate of each $V_\ell$, making it possibly larger than the actual $V^*$. 
Indeed, the argmax over our relaxed $R$ and $P$ can only make the RHS of \eqref{eq:oracle_obj} larger, which in turn makes the RHS of the inequalities in \textbf{Fig}.~\ref{fig:lp_factored} larger.


\raggedbottom
\begin{remark}
\label{rmk:trans_fn}
Each $P_j(\cdot | s[\text{Pa}(Z_j^h)], a)$ marginal has its own confidence set in $\mathcal{P}_t^{j}(d_t^{P_{j}})$, and only depends on the inner sum over $\text{Val}(Z_j^h)$ within each constraint in \textbf{Fig}.~\ref{fig:lp_factored}. This is essential.
\end{remark}


\subsection{Algorithm for Separation Oracle}
We now describe the algorithm implementing the separation oracle for solving \textbf{Fig.~}\ref{fig:lp_factored}. 
We repeat the following  for each action $a \in \mathcal{A}$ separately.

\subsubsection{Computing Optimistic Parameters}
If all constraints in \textbf{Fig}.~\ref{fig:lp_factored} are satisfied, the tightest constraint in particular is satisfied. If a constraint is \textit{not} satisfied, then this constraint can be returned for $\mathbf{w}$. 
Our algorithm checks whether the following inequalities hold for each action $a$, obtained by rewriting the constraints in  \textbf{Fig}.~\ref{fig:lp_factored}:
\begin{equation}
    \label{eq:oracle_obj}\small
    \begin{split}
    0 \geq \myquad[1]
    \underset{\mathclap{\substack{
    \ell \in [\tau], 
    s \in \mathcal{S}, \overline{R}_i \in \mathcal{R}_t^i
    P_j(\cdot | s[\text{Pa}(Z_j^h)], a) \in \mathcal{P}_t^j
    }
    }}{\max}\myquad[6]\ \ 
    \left[ 
        \sum_{i=1}^l \overline{R}_i(s,a)
        + \sum_{j=0}^\phi 
        \left( - w^{(\ell)}_j h_j(s) \right. \right.
            \left. \left.
            + w^{(\ell+1)}_j \sum_{\mathclap{\hat{s}' \in \text{Val}(Z_j^h)}} h_j(\hat{s}') 
            P^{(\ell+1)}_j(\hat{s}'| s[\text{Pa}(Z_j^h)], a)
        \right)  
    \right]
    \end{split}
\end{equation}
Notice each $\overline{R}_i$ depends only on the subset of state variables given by its scope $Z_i^R$. We can thus precompute the optimal value of $\overline{R}_i(x[Z_i^R])$ for the polynomial number of assignments to $x[Z_i^R]$, represented by $z \in \text{Val}(Z_i^R)$, in $O(1)$ time by using the largest value within the confidence set:
\begin{equation}
    \label{eq:reward_fn_opt}\small
    \overline{R}_i(z) = \frac{1}{n_t(z)} \sum_{\tau < t; x_\tau = x} \delta{y_\tau} + \sqrt{\frac{d_t}{n_t(z)}},
\end{equation}
where -- by abuse of notation -- $n_t(z)$ denotes the number of visits to any $(s,a)$ which takes the values given by $z$ over state variables in $Z_i^R$, up until time $t-1$. Notice that this allows us to fix optimistic values for the rewards in $O(lm)$ time by creating a polynomial-sized lookup table for the value of $\overline{R}_i$ at any $(s,a)$ constraint.

We would like to use a similar procedure to determine an optimistic transition function. For each $j$ in \eqref{eq:oracle_obj}, given $a$, there are multiple transition marginals to solve for, where each depend only on an assignment $z \in \text{Val(Pa(}Z_j^h))$ to the parents of the $j$th scope (\textbf{Rmk}.~\ref{rmk:trans_fn}). Therefore, we have the following optimization problem over each $P^{(\ell)}_j(\cdot | s[\text{Pa}(Z_j^h)], a)$:
\begin{equation*}
\label{eq:trans_fn_opt}\small
\begin{aligned}
\max_{P}
w^{(\ell)}_j \sum_{\mathclap{\hat{s}' \in \text{Val}(Z_j^h)}} 
    h_j(\hat{s}') P^{(\ell)}_j(\hat{s}'| s[\text{Pa}(Z_j^h)], a) 
\end{aligned}
\end{equation*}
subject to the constraint that $ P^{(\ell)}_j(\cdot | s[\text{Pa}(Z_j^h)], a) \in \mathcal{P}_t^j$. 
As $\mathcal{P}_t^j$ is a convex set (for a given marginal), we can use a variation of Figure~2 of \cite{jaksch} to solve this problem. 
To maximize a linear function over a convex polytope, we need only consider the polynomial number of polytope vertices. Our \textbf{Alg}.~\ref{algo:trans_fn_opt} given in \textbf{Appx.~}\ref{appx:variable_elimination} simply greedily assigns resources to high valued $h_j(s'_k)$ functions, while normalizing to ensure that $P$ remains a true probability distribution.


\begin{remark}
We only compute the optimistic parameters for both the reward and transition functions a single time before solving \textbf{Fig}.~\ref{fig:lp_factored}. 
Notice the optimistic reward did not depend on 
$\mathbf{w}$, so we can use the resulting values for each later call to the separation oracle algorithm. 
Similarly, optimistic transition probabilities depend only on $\operatorname{sign}(w^{(\ell)}_j)$ in \textbf{Alg}.~\ref{algo:trans_fn_opt}, 
which means there we only need to compute \emph{at most two} $P^{(\ell)}_j$ for each $j$. 
For each of $N$ transition marginals, we compute and store both orderings based on $\operatorname{sign}(w_j^{(\ell)})$ in the lookup table. 
To check \eqref{eq:oracle_obj} for a query $\mathbf{w}$ in the algorithm, we use transition functions corresponding to the correct ordering in $O(1)$ by table lookup.
\end{remark}


\subsubsection{Variable Elimination}
We now have a polynomial-size lookup table for each possible $\overline{R}_i(x[Z_i^R])$ and $P^{(\ell)}_j(\cdot | s[\text{Pa}(Z_j^h)], a)$. However, we are still left with a maximization over an exponential sized state space $\mathcal{S}$ in \eqref{eq:oracle_obj}. To ameliorate this, we utilize the procedure of \textit{variable elimination} from probabilistic inference, which was applied to FMDPs by \cite{guestrin}.

Variable elimination constructs a new optimization problem $\Omega$, equivalent to \eqref{eq:oracle_obj}, but over a tractable constraint space. Let some order over $\mathcal{S}_1, \dots, \mathcal{S}_m$ be given, and assume that our state space is $\{0, 1\}^m$. 

Based on \eqref{eq:oracle_obj}, we define $c^{(\ell)}_j(s, a)$ as:
\begin{equation*}\small
- w^{(\ell)}_j h_j(s)  
            + w^{(\ell+1)}_j \sum_{\mathclap{\hat{s}' \in \text{Val}(Z_j^h)}} h_j(\hat{s}') 
            P^{(\ell+1)}_j(\hat{s}'| s[\text{Pa}(Z_j^h)], a). 
\end{equation*}
Without loss of generality, we will only use one $c_j(s,a)$ to demonstrate the variable elimination, because the variable elimination order is only controlled by the scopes $Z^h_j$ indexed by $j$, so procedure is the same for each $c^{(\ell)}_j(s,a)$. 
We illustrate one step of the variable elimination, and the rest follow similarly. Suppose that the only scopes containing $\mathcal{S}_1$ are $Z_1^R = \{ \mathcal{S}_1 \}$, and $\text{Pa}(Z_1^h) = \{ \mathcal{S}_1, \mathcal{S}_4 \}$. Suppose that the first state variable to eliminate is $\mathcal{S}_1$. Variable elimination rewrites \eqref{eq:oracle_obj} by moving the ``relevant functions" inside (due to linearity):
\begin{align}\small\begin{split}
    \underset{s \in \bigotimes_{i=2}^m \mathcal{S}_i}{\max} \bigg[ &\sum_{i=2}^l \overline{R}_i(x[Z_i^R]) + \sum_{\mathclap{j=0, 2\dots \phi}}  c_j(s, a)\\ 
    &+ \underset{\mathcal{S}_1}{\max}\Big[ \overline{R}_1(x[Z_1^R]) + c_1(s,a) \Big] \bigg]
    \end{split}
\end{align}
Next, we replace 
$\underset{\mathcal{S}_1}{\max}[ \overline{R}_1(x[Z_1^R]) + c_1(s,a)]$
with a new LP variable $u_{\mathcal{S}_1}^{e_r}$. However, to enforce $u_{\mathcal{S}_1}^{e_r}$ to be the max, we need to add four additional linear constraints 
in the form of 
$u_{\mathcal{S}_1}^{e_r} \geq \overline{R}_1(x[Z_1^R]) + c_1(s,a)$, 
one for each binary assignment to $\mathcal{S}_1, \mathcal{S}_4$. 
These constraints involve evaluating $\overline{R}_1(x[Z_1^R])$ and $c_1(s,a)$ at each assignment, which simply uses our previously-constructed $\poly$ sized lookup table. 
(details in \textbf{Appx.}~\ref{appx:variable_elimination}).

In the general case 
the complexity of such variable elimination has an exponential dependence on the \textit{width} of the \textit{induced cost-network} of our scopes. 
Let the set of all scopes $Z = \{Z_i^R \mid i\in[l]\} \cup \{\text{Pa}(Z_j^h) \mid j\in[\phi]\}$ be given. We can construct a cost network over variables $\mathcal{S}_1, \dots, \mathcal{S}_m$ s.t.\ there is an undirected edge between any two variables iff they appear together in any scope in $Z$. The width of this network is the longest path between any two variables.



\begin{theorem}
\label{them:oracle_poly}
Given an efficient variable elimination ordering over the induced cost network, a polynomial-time (strong) separation oracle exists.
\end{theorem}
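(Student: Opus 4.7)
The plan is to implement the separation oracle by, given a query weight vector $\mathbf{w}$, efficiently computing the maximum of the right-hand side minus the left-hand side of the constraints in Fig.~\ref{fig:lp_factored} over the joint choice of $(s, a, \ell)$ and of admissible $R, P$ in their confidence sets, i.e., by evaluating (\ref{eq:oracle_obj}). If this max is nonpositive, all constraints are satisfied and we certify feasibility; otherwise the maximizing $(s, a, \ell)$ gives a violated constraint to return as the separating hyperplane. Since \eqref{eq:oracle_obj} is linear in the (pre-fixed) $\mathbf{w}$ entries, a violated constraint is a genuine half-space separating $\mathbf{w}$ from the feasible polytope, yielding a \emph{strong} oracle.

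First, I would handle the inner $\argmax$ over $R$ and $P$ by noting (as the paper observes) that the optimistic $\overline{R}_i(z)$ are computable once via \eqref{eq:reward_fn_opt} independently of $\mathbf{w}$, and that for each transition marginal $P_j^{(\ell)}$ only the sign pattern of $w_j^{(\ell)}$ matters, so the two possible greedy polytope-vertex solutions from Alg.~\ref{algo:trans_fn_opt} can be precomputed and stored in a lookup table of size $O(\poly(m) |\mathrm{Val}(\mathcal{S}_k)|^\zeta \phi)$. Thus, conditional on $\mathbf{w}$, the objective of \eqref{eq:oracle_obj} reduces to $\sum_{i=1}^l \overline{R}_i(s[Z_i^R],a) + \sum_{j=0}^\phi c_j^{(\ell)}(s,a)$ where each summand depends only on $s$ through a single small scope $Z_i^R$ or $\mathrm{Pa}(Z_j^h)$.

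Next, I would apply the variable elimination procedure of \cite{guestrin} in the given ordering over $\mathcal{S}_1,\dots,\mathcal{S}_m$, for each $\ell \in [\tau]$ and each $a \in \mathcal{A}$. At each elimination step the inner $\max_{\mathcal{S}_i}$ over the sum of ``currently relevant'' functions is replaced by a fresh LP variable $u$ with an inequality $u \geq (\text{sum})$ for every joint assignment to the scope variables that co-occur with $\mathcal{S}_i$ in some remaining function. By the hypothesis that the induced width of the cost network is bounded by some constant (or at worst $O(\log m)$), the number of such assignments at each step is polynomial, so the total number of auxiliary variables and constraints produced is $O(\poly(m,\phi,\tau,|\mathcal{A}|))$; each coefficient comes from an $O(1)$ lookup-table access. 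The resulting finite LP is equivalent to the maximization in \eqref{eq:oracle_obj} and can be solved in polynomial time. To produce a violated constraint, I would trace back the maximizing tuple $(s^\star, a^\star, \ell^\star)$ from the VE recursion as in standard MAP-style decoding, and return the constraint in Fig.~\ref{fig:lp_factored} instantiated at $(s^\star, a^\star, \ell^\star)$ with the precomputed optimistic $\overline{R}_i, P_j^{(\ell)}$; this constraint, being linear in $\mathbf{w}$ and violated at the query point, is a valid separating hyperplane for the feasible region.

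The main obstacle I anticipate is the bookkeeping for the auxiliary ``optimistic'' objects: one has to argue that replacing the joint $\argmax$ over $(R,P)$ with independently optimized scope-local objects does not break equivalence with \eqref{eq:oracle_obj}, and that the relaxation of a single $P_j$ into $\tau$ per-step copies $P_j^{(\ell)}$ (as the paper allows) keeps the separation oracle sound, i.e., every constraint it returns is indeed implied by the true feasible set of Fig.~\ref{fig:lp_factored}. This follows because each local optimization only enlarges the RHS of each constraint, so any $\mathbf{w}$ feasible for the relaxed system is feasible for the original; meanwhile a violated relaxed constraint at $\mathbf{w}$ is still a valid inequality of the relaxed polytope, which is what the ellipsoid method requires. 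The other subtle point is verifying that the induced-width assumption is enough: once that is fixed, the polynomial running time follows from standard VE complexity bounds and from the polynomial sizes of the precomputed lookup tables for $\overline{R}_i$ and $P_j^{(\ell)}$.
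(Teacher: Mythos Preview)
Your proposal is correct and follows essentially the same approach as the paper: precompute the optimistic $\overline{R}_i$ and the two sign-dependent versions of each $P_j^{(\ell)}$ into lookup tables, reduce the max over $s$ in \eqref{eq:oracle_obj} to a polynomial-size LP via variable elimination (one per action, per step), and if the optimum exceeds $0$ read off a violating $(s^\star,a^\star,\ell^\star)$ to output as the separating hyperplane. The only cosmetic difference is that the paper extracts $s^\star$ by identifying the chain of tight constraints in the auxiliary LP (their Lemma in Appendix~\ref{appx:oracle_proof}, invoking Thm.~4.4 of \cite{guestrin}), whereas you phrase the same step as MAP-style backtracking through the VE recursion; since $\mathbf{w}$ is fixed when the oracle runs, these are equivalent and your formulation is arguably cleaner.
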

\textit{Proof}: 
For a given $\mathbf{w}$, obtain the simplified version $\Omega$ of the exponentially large LP formulation through variable elimination as above (\textbf{Alg.~}\ref{algo:simplify_oracle_obj}). 
Given $\mathbf{w}$, we can efficiently check the feasibility of the original LP by checking feasibility of $\Omega$. 
If $\Omega$ is infeasible for $\mathbf{w}$, then we obtain a sequence of tight simplified linear constraints with the final exceeding the bound of \eqref{eq:oracle_obj}.
Since simplified constraints are obtained by iteratively maximizing state variables, from these tight constraints we can read off the corresponding state variable values $s^*$. 
The inequality in \eqref{eq:oracle_obj} with $s^*$ is the one that $w$ violates, and we use this to define a separating hyperplane, 
which follows from tightness of the new optimization problem and Thm.~4.4 of \cite{guestrin}.

This implies planning in \textbf{Alg.}~\ref{algo:opt_planner} is efficient (\textbf{Appx}.~\ref{proof:oracle_poly}).


\subsection{Completing the Cutting Plane Analysis}


By standard arguments, any separating hyperplane may be made \textit{strict} by a perturbation.
We thus obtain a \textit{strong} separation oracle, which returns $\mathbf{w}$ if it lies in the solution set, or a strict separating hyperplane
whose half-space contains the feasible solution set and does not contain the query point $\mathbf{w}$.


We now establish the   objective  can be evaluated efficiently for 
\textbf{Fig}.~\ref{fig:lp_factored}.
First, recall that $\underset{\mathbf{w}}{\min} \sum_{s \in \mathcal{S}} \sum_{j=0}^\phi w_j^{(1)} h_j(s)$ is the objective of our problem. A na\"{\i}ve summation over states may require exponential time, so we simplify:
\begin{equation*}\small
    \sum_{j=0}^\phi w_j^{(1)} \sum_{s \in \mathcal{S}} h_j(s)
    = \sum_{j=0}^\phi w_j^{(1)} g(Z_j^h) \sum_{s_k \in \text{Val}(Z_j^h)} h_j(s_k) 
\end{equation*}
where $g: \{Z_j^h\ |\ j\in[\phi] \} \mapsto \mathbb{Z}^+$ counts the number of states that take value $h_j(s_k)$ by counting combinations of state variables which are not in the scope of $h_j$:
$
    g(Z_j^h) = \prod_{i=1, \dots, m \notin Z_j^h} |\text{Val}(\mathcal{S}_i)|.
$
We can now evaluate our objective in polynomial time by iterating only over states within the scope of each $h_j$. 

Next, the Ellipsoid algorithm also requires that $\mathbf{w}$ lies in a bounded convex set. We will assume $\| \mathbf{w} \|_1 \leq W$ for some $W \in \mathbb{R}$, for reasons discussed further in \textbf{Sec}.~\ref{sec:regret_analysis}. It is clear that if the MDP is well defined and has a bounded linear value function, then $\mathbf{w}$ must be bounded. Our main planning result follows.
\begin{theorem}
\label{them:planner_result}
The Ellipsoid algorithm solves
the optimization problem \textbf{Fig}.~\ref{fig:lp_factored}
in polynomial time.
\end{theorem}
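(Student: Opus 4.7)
The plan is to invoke the standard convex-optimization guarantee for the Ellipsoid method \citep{ellipsoid_book} (or the equivalent cutting-plane machinery of \citep{faster_cutting_plane}): given (i) a convex feasible region contained in a ball of polynomially bounded radius, (ii) a polynomial-time strong separation oracle for that region, and (iii) a polynomial-time-evaluable linear objective, one obtains an $\epsilon$-optimal solution in time polynomial in the input size and $\log(1/\epsilon)$. I would check each prerequisite in turn and then quote this meta-theorem.

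For (i), the only decision variables in \textbf{Fig.~}\ref{fig:lp_factored} are the weights $\mathbf{w}$, since the optimistic rewards $\overline{R}_i$ and transition marginals $P^{(\ell)}_j$ are precomputed once via \eqref{eq:reward_fn_opt} and \textbf{Alg.~}\ref{algo:trans_fn_opt} and thereafter treated as constants in every constraint. The assumption $\|\mathbf{w}\|_1 \leq W$ then contains the feasible region in a bounded convex set of polynomial description size, as justified in the paragraph above this theorem. For (ii), \textbf{Thm.~}\ref{them:oracle_poly} already supplies a polynomial-time separation oracle, and the strict-perturbation argument preceding this theorem upgrades it into a \emph{strong} separation oracle returning a strict separating hyperplane whenever $\mathbf{w}$ is infeasible. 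For (iii), the paragraph immediately above the theorem rewrote the objective as $\sum_{j=0}^\phi w_j^{(1)} g(Z_j^h) \sum_{s_k \in \mathrm{Val}(Z_j^h)} h_j(s_k)$, which is a sum of polynomially many terms computable from the fixed basis $h_j$ together with the closed-form count $g$; thus the objective is linear in $\mathbf{w}$ and polynomial-time evaluable.

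Assembling (i)--(iii) and plugging them into the Ellipsoid meta-theorem delivers polynomially many iterations, each running in polynomial time (one separation-oracle call, one objective evaluation, and an ellipsoid centroid update), and yields the claimed runtime. I expect the main subtlety to be the bit-complexity bookkeeping: one must argue that the confidence radii $d_t^{R_i}, d_t^{P_j}$, the empirical estimates $\hat{f}_t$, the basis values $h_j$, and the counts $g(Z_j^h)$ all admit polynomial-length representations so that no numerical blowup occurs across iterations, and that the separating hyperplanes produced by \textbf{Thm.~}\ref{them:oracle_poly} likewise have polynomial bit complexity. Given the explicit closed forms in \eqref{eq:reward_fn_opt} and \textbf{Alg.~}\ref{algo:trans_fn_opt}, and since the basis and elimination ordering are fixed inputs of bounded size, this reduces to a routine verification rather than a new argument.
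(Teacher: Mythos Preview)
Your proposal is correct and follows essentially the same route as the paper: invoke the Ellipsoid meta-theorem from \cite{ellipsoid_book} using the strong separation oracle of \textbf{Thm.~}\ref{them:oracle_poly}, the boundedness $\|\mathbf{w}\|_1\le W$, and the polynomial-time objective evaluation established just before the theorem. The paper phrases the bit-complexity requirement via the ``well-described polyhedron'' formalism (Definition~6.2.2 and Theorem~6.4.9 of \cite{ellipsoid_book}) and uses binary search on the objective value, but this is exactly the bookkeeping you flagged as routine.
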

This follows from the strong separation oracle of \textbf{Thm.~}\ref{them:oracle_poly}, but we defer the details to \textbf{Appx}.~\ref{appx:ellipsoid_convergence}.

\subsection{Runtime}
For each episode, the optimistic $P, R$ for all scopes are precomputed in time $O(\tau |A| J \phi)$  (please see \textbf{Thm.~}\ref{corr:regret_bound} for notations). The state-of-the-art convex program solver of \citet{faster_cutting_plane} takes a separation oracle for a convex set $K \subset \mathbb{R}^n$, where $K$ is contained in a box of radius $R$, and finds the optimum in $K$ up to error $\epsilon$ in $O(n \log(nR / \epsilon))$ oracle calls, taking an additional $O(n^2)$ steps per call.
In our case, $n = \tau\phi$, because we are searching for $\phi$-dimensional linear weights $\mathbf{w} \in \mathbb{R}^\phi$ for each step in the episode, and $R \leq O(\tau \phi W)$ because $\|\mathbf{w}\|_1 \leq W$.
The runtime of the separation oracle is $|A|$ times the cost of solving the small LP after variable elimination.
\citet{Cohen2021SolvingLP} can solve LPs with $n$ variables to relative accuracy $\delta$ near time $O(n^{2.5} \log(n / \delta))$ using fast matrix multiplication algorithms.

The variable elimination procedure introduces $n \leq O(\tau m \kappa^{\omega})$ variables into our reduced LP, where $m$ is the number of state variables (not states), and $\omega$ 
is the small induced the width of the \textit{cost network} of the scopes,  
so our separation oracle runs in time $O(|A| (\tau m \kappa^{\omega} )^{2.5} \log(\tau m \kappa^{\omega} / \delta))$. 
Therefore, the planner runtime for each episode is 
$O(\tau |A| J \phi + |A| (\tau m \kappa^{\omega})^{2.5} \log(\tau m \kappa^{\omega} / \delta) \tau \phi \log( \tau^2\phi^2 W /\epsilon) + \tau^3\phi^3 \log (\tau^2\phi^2 W /\epsilon) )$.

\section{Regret Analysis}
\label{sec:regret_analysis}

By using an analysis similar to \cite{osband}, we can also derive the following regret bound for \textbf{Alg.}~\ref{algo:ucrl_factored}, UCRL-Factored for FSMDPs (details in \textbf{Appx.}~\ref{appx:regret_analysis}). 
\begin{theorem}
\label{corr:regret_bound}
Let $M^*$ be a FSMDP with $V_*^*$ having a linear decomposition. Let $l + 1 \leq \phi$, $C = \sigma = 1$, $|\mathcal{S}_i| = |\mathcal{X}_i| = \kappa$, $|Z_i^R| = |\text{Pa}(Z_i^h)| = \zeta$ for all $i$, and let $J = \kappa^\zeta$, $\| \mathbf{w} \|_1 \leq W$, and $\max_{j\text{ and }s \in \text{Val}(Z_j^h)} |h_j (s)| \leq G$. Assuming $WG \geq 1$, and an efficient variable elimination ordering, then 
\begin{align*}
    \text{Regret}(T, \pi_\tau, M^*)
    \leq \tau \bigg(
        30\phi WG \sqrt{T J (J\log(2) + \log(2N\zeta T^2 / \delta))}
    \bigg)\label{eq:finalbound2}
\end{align*}
w.p.\ at least $1 - 3\delta$. 
\end{theorem}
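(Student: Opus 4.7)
The plan is to adapt the regret analysis template of \cite{osband} for UCRL-Factored, making three targeted modifications that reflect our FSMDP setting: (1) our transition confidence sets are over marginal distributions $P_j$ rather than over a jointly factored distribution, (2) the value function is bounded by $\|V\|_\infty \leq \phi W G$ via the factored linear structure (Def.~\ref{def:factored_value}) together with the assumptions $\|\mathbf{w}\|_1 \leq W$ and $|h_j| \leq G$, and (3) optimism is approximate: the \textbf{OptimisticPlanner} returns a policy $\mu_k$ whose value on the selected $M_k \in \mathcal{M}_k$ satisfies $V^{M_k}_{\mu_k,1} \geq V^{M^*}_{\mu^*,1} - \epsilon$ for $\epsilon = \sqrt{1/k}$ (Thm.~\ref{them:planner_result}).

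First I would establish that the true parameters reside in their confidence sets with high probability. Subgaussian concentration of the $l$ reward components combined with a union bound over time steps and reward-scope contexts gives $\overline{R}^*_i \in \mathcal{R}_t^i(d_t^{R_i})$ jointly with probability $1-\delta$ for the chosen $d_t^{R_i}$. A Weissman-type $L_1$ concentration inequality for empirical discrete distributions, unioned across the $N$ transition marginals and their $|\text{Pa}(Z_j^h)|$-indexed conditioning contexts across episodes, gives $P^*_j \in \mathcal{P}_t^j(d_t^{P_j})$ with probability $1-\delta$ under the chosen $d_{t_k}^{P_j}$. Conditioning on both events, optimism yields the per-episode decomposition
\begin{equation*}
\Delta_k \leq \mathbb{E}_{s \sim \rho}\bigl[ V^{M_k}_{\mu_k, 1}(s) - V^{M^*}_{\mu_k, 1}(s) \bigr] + \sqrt{1/k}.
\end{equation*}

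Next I would unroll the value difference along trajectories of $\mu_k$ in $M^*$ using the standard Bellman-residual telescoping (performance difference lemma). At each step $\ell$, the residual splits as a sum over $l$ reward errors plus a sum over the $\phi$ marginal transition errors. Crucially, the backprojection identity \eqref{eq:backprojection} converts the joint next-state expectation $\sum_{s'} (P_k - P^*)(s'|s,a) V^{M^*}_{\mu_k,\ell+1}(s')$ into $\sum_{j=1}^\phi w^{*(\ell+1)}_j \sum_{\hat s'} h_j(\hat s')(P_{k,j} - P^*_j)(\hat s'|s[\text{Pa}(Z_j^h)],a)$, so the $L_1$ confidence bound on marginal $j$ times $\max_{\hat s'} |w_j^{*(\ell+1)} h_j(\hat s')| \leq W G$ controls its contribution, and summing across $j$ introduces the $\phi$ factor. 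Reward residuals are bounded directly by their subgaussian radii. This produces a per-step bound of order $\phi W G \sqrt{d_{t_k}/\min_{i,j} n_{t_k}}$.

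Finally, summing the per-step residuals over all $\tau \lceil T/\tau\rceil$ steps uses a pigeonhole argument (plus an Azuma-Hoeffding bound on the $\mathbb{E}$-vs-sample gap, contributing the third $\delta$) of the form $\sum_t 1/\sqrt{n_t(z)} \leq O(\sqrt{T \cdot \#\text{contexts}})$, where the number of contexts for a marginal is $J = \kappa^\zeta$. The dominant confidence radius $d_{t_k}^{P_j}$ contributes the $J\log 2 + \log(2N\zeta T^2/\delta)$ term under the radical, and the $\sqrt{1/k}$ planner slack sums to $O(\sqrt{T/\tau})$, which is absorbed into the leading term. Collecting everything, with the extra factor $\tau$ arising because the value difference telescopes over $\tau$ steps per episode, yields the stated bound. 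The main obstacle I anticipate is tracking the constants carefully enough to reach the stated $30\phi W G$ prefactor while correctly accounting for approximate optimism and for the fact that our marginals $P^{(\ell)}_j$ are chosen independently per step by the planner---one must verify that this extra relaxation only \emph{inflates} the optimistic value (Remark after Fig.~\ref{fig:lp_factored}), so the standard optimism-based regret argument still applies without additional loss.
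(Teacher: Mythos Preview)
There is a genuine gap in the telescoping step. You write that the backprojection identity converts
\[
\sum_{s'} (P_k - P^*)(s'|s,a)\, V^{M^*}_{\mu_k,\ell+1}(s')
\]
into a sum over $j$ with weights $w^{*(\ell+1)}_j$. This presupposes that $V^{M^*}_{\mu_k,\ell+1}$---the value of the \emph{suboptimal} policy $\mu_k$ in the true environment---is linear in the given basis. The paper's linearity assumption is only on $V^{M^*}_{\mu^*,\ell}$, the \emph{optimal} value function; nothing guarantees that $V^{M^*}_{\mu_k,\ell}$ lies in the span of $\{h_j\}$, and in general it will not. So the backprojection step as you wrote it is unjustified, and the weights $w^{*(\ell+1)}_j$ you invoke do not correspond to the function $V^{M^*}_{\mu_k,\ell+1}$ even if it happened to be linear.

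The paper sidesteps this by choosing the \emph{other} telescoping: it writes
\[
(V^k_{k,1}-V^*_{k,1})(s) = \sum_{\ell=1}^\tau (\mathcal{T}^k_{k,\ell}-\mathcal{T}^*_{k,\ell})\,V^k_{k,\ell+1}(s_{t_k+\ell}) + \text{martingale},
\]
so that the Bellman residual acts on the \emph{planner's} value function $V^k_{k,\ell+1}$, which is linear by construction (it is the LP solution in the span of $\{h_j\}$, with weights $w^{k,(\ell+1)}_{k,j}$). The paper is explicit about this point, noting that $V^*_{k,\ell}$ never appears in the residual and therefore need not be linear. This also requires the side fact $V^k_{k,\ell}=\mathcal{T}^k_{k,\ell}V^k_{k,\ell+1}$, which holds because the LP constraints are tight at the optimum---another ingredient your sketch omits. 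Once the correct telescoping is in place, the H\"older bound on $V^k_{k,\ell+1}$ gives the $WG$ factor (not $\phi WG$ at this stage; the $\phi$ enters later when the $\max_j$ over marginals is relaxed to a sum), and the rest of your outline---concentration, pigeonhole over contexts, Azuma for the martingale---matches the paper's argument.
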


\begin{remark}
There is a lower bound example in \cite{xu2020FMDP} that shows such a dependence on $J$ is necessary. Their example extends 
Jaksch et al.\ (2010)
where there are two states $s, s'$, and $r(s, a ) = 0, r(s', a)=1$ for any action $a$. This can be changed to only receiving penalty at $s$, thus giving linear $V^*\equiv0$, which is captured by linear $V$ FSMDP.
\end{remark}

We modify analysis of \cite{osband} by replacing their use of the FMDP product transition structure with a factored linear basis assumption on the value function.
Let $V^{M^*}_{\mu^M, \ell}$ represent the resulting value function after applying policy $\mu^M$ instead of $\mu^*$ to $M^*$. Importantly, we note that we \textit{do not} need to assume each $V^{M^*}_{\mu^M, \ell}$ is linear (\textbf{Eq.}~\eqref{eq:V_k_k_i_V_star_k_i}-\eqref{eq:T_k_i_T_star_k_i} in \textbf{Appx.}~\ref{appx:regret_analysis}).

To start our analysis,  we denote: 
\begin{equation*}
\small
    \mathcal{T}^M_{\mu^M,\ell} V(s) = 
    \overline{R}^M(s, \mu(s))
     + \sum_{s' \in \mathcal{S}} P^M(s' | s, \mu(s)) V(s'),
\end{equation*}
We simplify our notation by writing $*$ in place of $M^*$ or $\mu^*$ and $k$ in place of $\tilde{M}_k$ and $\tilde{\mu}_k$.
Without loss of generality, we examine the regret of an episode starting from each given state. Let $s_{t_k + 1}$ be the first state in the $k$th episode.  
The regret of the $k$th episode is then given by 
$\Delta_k = V_{*,1}^{*}(s_{t_k + 1}) - V_{k,1}^{*}(s_{t_k + 1})$. 
Note that $P^*$ is homogeneous throughout the episode, but there are distinct (optimistic) estimates $\{ P^{k, (\ell)} \}_{\ell \in[\tau]}$ for each step. 
We will also denote  $x_{k, i} = (s_{t_k + i}, \mu_k(s_{t_k + i})) $. 
Importantly, $V_{k,\ell}^* = \mathcal{T}_{k, \ell}^* V_{k,\ell+1}^*$ because here we are applying the action of $\mu^k$ to the actual environment of $M^*$, and $V_{k,\ell}^k = \mathcal{T}_{k, \ell}^k V_{k,\ell+1}^k$ because at the optimal solution, the LP constraints are tight. 

First, let's add and subtract the computed optimal reward:
\begin{align*}\small
&  V_{*,1}^{*}(s_{t_k + 1}) - V_{k,1}^{*}(s_{t_k + 1})=
\\
&( 
    V_{k, 1}^k(s_{t_k + 1})\! -\! V_{k, 1}^*(s_{t_k + 1}))\!+\!(
    V_{*, 1}^*(s_{t_k + 1})\! -\! V_{k, 1}^k(s_{t_k + 1})), 
\end{align*}
where the second term on the RHS can be bounded by a choice of planning error $\epsilon = \sqrt{1/k}$.  
Indeed $V^k_{k,1}$ without planning error can only overestimate $V_{*,1}^*$ by optimism. 


Now let's
deconstruct the first term on the RHS  above through dynamic programming \citep{ osband}:
\begin{equation}
    \label{eq:main_applying_dp}\small
   = \sum_{\ell=1}^\tau (\mathcal{T}_{k, \ell}^k - \mathcal{T}_{k, \ell}^*)V_{k, \ell+1}^k (s_{t_k + \ell}) + \sum_{\ell=1}^{\tau} d_{t_k + \ell}, 
\end{equation}
where $d_{t_k+\ell}$ is a martingale difference bounded by $\max_{s \in \mathcal{S}} V^k_{k,\ell+1}(s) $, which in turn is bounded by 
$B_{w,h} = 
\| \mathbf{w} \|_1  \max_{s \in \mathcal{S}} \max_j | h_j(s) |$
due to H\"older's inequality being applied to the linear form of the computed $V^k_{k+i}$. 
Next, similarly to \cite{jaksch}, we apply Azuma–Hoeffding to obtain $\sum_{k=1}^{\lceil T/ \tau\rceil}\sum_{i=1}^{\tau} d_{t_k + i} \leq O(B_{w,h} \sqrt{T})$ w.p.\ $\geq 1-\delta$.
(However, we now have $B_{w,h}$ instead of a dependence on the MDP diameter.)  

For the remaining terms in \eqref{eq:main_applying_dp}, we apply Cauchy-Schwarz to obtain the following bound:
\begin{equation}\small
    \begin{split}
    & \leq 
    \sum^{\tau}_{\ell=1}
    \left[ 
    \left|\overline{R}^k(x_{k, \ell}) - \overline{R}^*(x_{k, \ell})\right|+
    \right.
     \label{eq:main_analysis_target} \\
    & \left.
      \sum_{j=1}^\phi \bigg| w_{k, j}^{k, (\ell+1)} \!\sum_{s' \in \mathcal{S}} (P^{k,(\ell)} (s' | x_{k, \ell}) -  P^* (s' | x_{k, \ell})) h_{j}(s') \bigg|\ \!\right]
    \end{split}
\end{equation}
The difference between the actual reward $\bar{R}^*$ and computed reward $\bar{R}^k$ in \eqref{eq:main_analysis_target} are bounded  by the widths of reward confidence sets, akin to \cite{osband}. 
For the rest of terms in \eqref{eq:main_analysis_target}, 
we diverge from \cite{osband} by applying a different H\"older's inequality argument
which results in a bound with respect to
$ \| \mathbf{w} \|_1 \max_{s \in \mathcal{S}} \max_j | h_j(s) |$
and $\|P^k - P^*\|_1$, 
 where the latter is bounded by the widths of transition probability confidence sets similar to those of the rewards.
We then apply \textbf{Corollary~}\ref{corr:bounded_widths} from \textbf{Appx.}~\ref{appx:regret_analysis} to bound the widths of confidence sets over time, which uses the concentration bound 
 $\leq O(\textit{poly}(J) \sqrt{T} )$ 
for each confidence set. 

\paragraph{Discussion}
Our bound is similar to Cor.~2 from \cite{osband}, but not identical. Most importantly, we have \textit{provided} an efficient planning algorithm which \cite{osband} \textit{assume} as an oracle when computing their regret bound.
We also have an extra $\sqrt{J}$ cost due to the support of each the transition marginal functions we are estimating being of size $J$ and not of size $\kappa$. 
This follows naturally from considering transition functions that don't fully factorize. 

Instead of a dependence on the number of state variables $m$, we have a factor of $\phi$, the number of basis functions. 
\cite{osband} have a factor of the \textit{diameter} in their formal guarantee.
However, our bound \textit{does not} rely on the diameter and instead depends only on the 1-norm of the basis vector $W$ and the max value that any basis function $G$. 
Our dependence on the horizon $\tau$ rather than diameter matches the recent minimax-regret of \cite{sra2020FMDP} for finite episode RL in FMDPs. However, our regret bound can be obtained using a provably efficient algorithm (without assuming there is an oracle that efficiently iterates through every state). 


\section{Future Work}

No lower bound for our problem setting is known.  
Our regret bound in \textbf{Thm}.~\ref{corr:regret_bound} is polynomial in $\phi$, which represents the size of our basis. 
We also ask if it is possible to remove this dependency on $\phi$ in our transition function error analysis \eqref{eq:main_analysis_target}.
This would allow for our approach to be utilized with kernels and a possibly infinitely sized basis. 
Correspondingly, we ask if there exists an efficient kernelized planning algorithm; if both could be resolved affirmatively, this would in turn enable the use of rich, kernelized value functions (as opposed to Q-functions) for RL in large FMDPs.

\subsubsection*{Acknowledgements}
This research is partially supported by NSF awards IIS-1908287, IIS-1939677, and CCF-1718380, and associated REU funding.
We thank the anonymous AISTATS reviewers for their helpful comments and discussions.


\bibliographystyle{plainnat}
\bibliography{ref}

\appendix

\onecolumn



\setcounter{section}{0}
\renewcommand*{\theHsection}{Supplement.\the\value{section}}

\section{Relation Between Linear Value and Linear Q-function}
\label{appx:linear_V_and_linear_Q}
Many recent advances on provable polynomial RL algorithms assumes the state-action value function ($Q$-function) to be linear:
a linear $Q$-function is defined as 
$Q^* = \sum_{i=1}^\phi w_i h_i(s, a)$,
with basis elements $\{h_1, .., h_\phi\}$,
and almost all of them use Least Square Value Iteration (LSVI) based algorithms. 
However, the linear $Q$-function assumption has its limitations.
For example, \citet{yang_linear_q} shows that a linear $Q$ function requires the transition function to be linear in order to avoid unbounded Bellman error. 
(A similar argument appears in \textbf{Proposition~2.3, 5.1} of \citet{jordan_linear_rl}).
By contrast, we will show in \textbf{Prop.}~\ref{prop:V_vs_Q_in_main_paper} that a linear value function \emph{does not} entail that the $Q$-function is linear. 
As a contraposition, it's been shown ( \citep{jordan_linear_rl}, \citep{yang_linear_q} ) that linear transition function implies linear $Q$ function. Therefore in our example, the transition function is not linear either. 

Moreover, value iteration based algorithms with linear $Q$ function require Bellman Error to be zero. This needs to be either explicitly assumed or it requires linear transition function in order for this to be true.
This drastically reduced the practicality of the linear function model.
Since our algorithm is not value iteration based and our problem has a finite-episode, this restriction does not apply to us.

Intuitively, we can have nonlinear $Q$ function while $V$ function being linear because $V(s) = \max_{a} Q(s,a)$ and maximum function being linear does not necessarily infer that piece-wise functions are linear.  
Concretely, for a given state-action basis $\{h_i(s,a)\}_i$, we can provide an MDP for which there is no coefficient setting $\mathbf{w}$ for which the optimal $Q$-function is linear, whereas this MDP will have an optimal linear value function $V^* = \sum_{i=1}^\phi w_i f_i(s)$ for \textit{any} state value function basis $\{f_i(s)\}_i$.



\textbf{Proposition~\ref{prop:V_vs_Q_in_main_paper}.}
Let a state-action ($Q$-function) basis $\{h_1(s,a), .., h_\phi(s,a)\}$ be given such that $\phi < N=2^m$. Then there is an MDP family $\mathcal{M}$ on $N$ states ($m$ binary factors) for which the optimal $Q$-function cannot be expressed as a linear combination of these basis functions with high probability ($1-2^{-N+\phi} \geq 1/2$) for any MDP $M \in \mathcal{M}$. On the other hand, every MDP $M \in \mathcal{M}$ does admit a compact, optimal linear value function representation for \textit{any} given basis set of state feature functions.

\begin{proof}
Consider a family of environments where there are $N$ states $S_1, \dots, S_N$, and for simplicity the time horizon $\tau = 1$.
Pick one of the states and call it $S_{opt}$.
There are two actions everywhere within these MDPs: action $a_1$ takes any state $S_i$ to $S_{opt}$ for all $i \in [N]$ and gives reward $0$; action $a_2$ takes $S_i$ to $S_{j(i)}$,  $j(i) \neq opt$, and gives a reward from the set $\{-1, -1/2\}$.
Call the family of all possible MDPs of this form $\mathcal{M}$. We sample $M\in\mathcal{M}$ uniformly at random---equivalently, by taking $j(i) \sim \mathrm{uniform}(N-1)$ independently for each $i$, and the rewards independently and uniformly from $\{-1, -1/2\}$.

The optimal value function is a constant $0$ for every state in every MDP $M \in \mathcal{M}$.
That is, $V(S_j) = 0$ for all $j \in [N]$ as the optimal policy simply takes the action $a_1$ everywhere -- we can always obtain $0$ by taking $a_1$ and the other action incurs negative reward in all states.
Therefore, the value function can be represented with \textit{any} basis by taking the zero linear combination. 


On the other hand, consider the $\phi \times 2N$ matrix of $Q$-function basis feature representations for each $s,a$ pair:
\begin{equation*}
    B = \begin{bmatrix}
     h_1(S_1, a_1) & h_1(S_2, a_1) & \dots & h_1(S_N, a_1) & h_1(S_1, a_2) & \dots & h_1(S_N, a_2) \\
     h_2(S_1, a_1) & h_2(S_2, a_1) & \dots & h_2(S_N, a_1) & h_2(S_1, a_2) & \dots & h_2(S_N, a_2) \\
     \vdots & \vdots & \ddots & \vdots & \vdots & \ddots & \vdots \\
     h_\phi(S_1, a_1) & h_\phi(S_2, a_1) & \dots & h_\phi(S_N, a_1) & h_\phi(S_1, a_2) & \dots & h_\phi(S_N, a_2)
    \end{bmatrix}
\end{equation*}
Choose a maximal ($d$) size set of states $S' = \{S'_1, \dots, S'_d\}$ such that the column vectors given by 
\newline
$\begin{bmatrix} h_1(S'_i, a_2) & h_2(S'_i, a_2) & \dots & h_\phi(S'_i, a_2) \end{bmatrix}^T$ are linearly independent for all states $S'_i \in S'$ (naturally, $d \leq \phi$ ).
Next, consider any assignment of rewards from $\{-1, -1/2\}$ for these $d$ states, and suppose for contradiction that there is a linear representation of every environment in $\mathcal{M}$.
By assumption, any state $\hat{S} \notin S'$ has $\begin{bmatrix} h_1(\hat{S}, a_2) & h_2(\hat{S}, a_2) & \dots & h_\phi(\hat{S}, a_2) \end{bmatrix}^T$ determined by a linear combination of columns of states in $S'$, given by $\lambda_1,\ldots,\lambda_d$. In particular, supposing that for some choice of $\{w_1,\ldots,w_\phi\}$, $\sum_i w_i h_i(S'_j,a_2)=Q(S'_j,a_2)$ for all $j$, if these also represent $Q(\hat{S},a_2)$, then 
\begin{align*}
Q(\hat{S},a_2)=\sum_{i=1}^\phi w_i h_i(\hat{S},a_2)
=\sum_{i=1}^\phi w_i\sum_{j=1}^d \lambda_j h_i(S'_j,a_2)
=\sum_{j=1}^d\lambda_j\sum_{i=1}^\phi w_i h_i(S'_j,a_2)
=\sum_{j=1}^d\lambda_j Q(S'_j,a_2).
\end{align*}
I.e., $Q(\hat{S}, a_2)$ is therefore determined by rewards of states in $S'$, but we have two distinct, possible values for $Q(\hat{S}, a_2)$ in our family: $\{-1, -1/2\}$. Therefore, MDPs taking one of them cannot be captured by linear functions over the basis.
Furthermore, for an MDP $M \in \mathcal{M}$ chosen at random, since the reward of each $\hat{S}$ is chosen independently, the $Q$-function is linear with probability only $2^{-(N-d)}$. Since $d\leq\phi<N$, this is at most $1/2$.
\end{proof}

We emphasize that we are first given a basis, and are interested in understanding families of environments which may or may not be a linear combination of these bases elements.
We \textit{do not} state that a random MDP from the family we provide does not have its own linear $Q$-function representation. (Indeed, any basis that includes $Q(s,a)$ trivially represents the $Q$ function.) We only state that for a \textit{given} basis, we can find an MDP $M$ whose optimal $Q$-function does not admit a linear decomposition with high probability.

\textbf{Prop.}~\ref{prop:V_vs_Q_in_main_paper} demonstrates that there exist some RL environments where it is feasible to learn a compact linear value function but for which a compact linear $Q$-function is not expressive enough. We remark that conversely to \textbf{Prop.~}\ref{prop:V_vs_Q_in_main_paper}, due to the relationship $V(s) = \max_{a} Q(s,a)$, there surely exist MDPs for which there is a compact linear $Q$-function but no compact linear value function. (It is in general only piecewise linear.) Therefore, we argue that the linear $Q$-function work is orthogonal to ours.

\section{Planner Construction Derivation}

\subsection{Linear Programming Formulation}
\label{appx:planning_extension}

We introduce a distinct value function $V_\ell$ for step $\ell$ each episode for the linear programming. 
Concretely, 
based on the Bellman operator 
$V_\ell(s) = \max_a \{ R(s, a) + \sum_{s'} P(s' | s, a) V_{\ell+1} (s') \}$, 
we need to solve the following multi-level linear problem with the following constraints 
(for simplicity we do not write out the linear constraints that $R, P$ must be within their respective confidence sets):
$$
\min_{V_{1}} \sum_s V_1(s) 
\ \ \ s.t.  \ \ \ 
V_1(s) \geq R(s, a) + \sum_{s'} P(s' | s, a) V_2 (s'), \ \ 
\forall s \in \mathcal{S}, a \in \mathcal{A},
$$
where $V_2$ is the solution of 
$$
\min_{V_2} \sum_{s} V_2(s)
\ \ \ s.t.  \ \ \ 
V_2(s) \geq R(s, a) + \sum_{s'} P(s' | s, a) V_3 (s'), \ \ 
\forall s \in \mathcal{S}, a \in \mathcal{A},
$$
where $V_3$ is the solution of subsequent subproblem involving $V_4$ with the same structure, and so on. This multi-level linear problem ends with 
$$
\min_{V_\tau} \sum_{s} V_\tau(s)
\ \ \ s.t.  \ \ \ 
V_\tau(s) \geq R(s, a) + \sum_{s'} P(s' | s, a) V_{\tau+1} (s'), \ \ 
\forall s \in \mathcal{S}, a \in \mathcal{A},
$$
where $V_{\tau+1}(s) = 0, \forall s \in \mathcal{S}$ because each episode only has $\tau$ steps. 
These linear programming formulations are equivalent to the step-wise sequential relationship:
$$
V_{\ell}(s) = \max_{a} \left\{  
R(s, a) + \sum_{s'} P(s' | s, a) V_{\ell+1} (s')
\right\},  \ \ \ i = 1, \dots, \tau. 
$$
By inductively following a similar argument as Lemma 1. of \citet{delgado}, we can see that this multi-level linear programming problem is equivalent to the following linear programing problem:
\begin{align}\label{eq:multi_V_LP}
& \min_{V_1} \sum_{s} V_1 (s) 
\\
s.t. \ \ \ &
V_{\ell}(s) \geq R(s, a) + \sum_{s'} P(s' | s, a) V_{\ell+1} (s'), \ \ 
\forall s \in \mathcal{S}, a \in \mathcal{A}, \ \ \ \ell = 1, \dots, \tau, 
\nonumber \\ 
& V_{\tau+1} (s) = 0, \ \ \  \forall s \in \mathcal{S}.  \nonumber
\end{align}

Here each $V_\ell(s)$ has the factored linear form $\sum_j w^{(\ell)}_j h_j(s)$. 
Intuitively, the tightness at the optimal of the LP ``pushes'' $V_\ell(s)$ to be the min of its own corresponding sub-problem.  

\subsection{Relation to Previous Formulations}
\label{appx:opt_problem_formulation}
Imprecise MDPs are MDPs where the transition function may be defined imprecisely over a bounded convex set. Naturally, this leads to multiple notions of optimality. One such notion is pessimism, where we are interested in the optimal policy in the case where the transition function is always working ``against us" (maximin). \cite{delgado} formulate the maximin solution to imprecise FMDPs by extending \eqref{eq:guestrin_ALP} as follows.
\begin{equation}
\label{eq:delgado_factored_MDP}
\begin{aligned}
\min_{w} \quad & \sum_{\mathbf{x}} \sum_{i=0}^k w_i h_i(\mathbf{x}) \\
\textrm{s.t.} \quad & \sum_{i=0}^k w_i h_i(\mathbf{x}) \geq R(\mathbf{x},a) + \gamma \sum_{\mathbf{x'}\in \mathcal{S}} P(\mathbf{x'}|\mathbf{x},a) \sum_{i=0}^k w_i h_i(\mathbf{x'}), \forall \mathbf{x} \in \mathcal{S}, \forall a \in \mathcal{A} \\
    & P(\mathbf{x'}|\mathbf{x},a) = \underset{Q}{\argmin} \sum_{\mathbf{x'} \in \mathcal{S}} Q(\mathbf{x'}|\mathbf{x},a) \sum_{i=0}^k w_i h_i(\mathbf{x'}) \\
    & \text{where } Q(\mathbf{x'}|\mathbf{x},a) = \prod_{i} Q(x_i'|pa(X_i'), a) \\
    & \textrm{s.t.} \quad Q(x_i'|pa(X_i'), a) \in K_a(X_i' | pa(X_i')) \\
\end{aligned}
\end{equation}
Where $K$ denotes a convex transition credal set.

\cite{guestrin} give a simplification of approximate linear programming (ALP) in the factored case, reducing the number of constraints to allow ALPs to be tractable even with exponentially many states in the MDP. \cite{delgado} applies a similar simplification to the imprecise case, allowing them to heuristically solve imprecise factored MDPs with an exponential number of states. However, due to their product constraint, the problem is non-convex in general and may not find the optimum value function.

Our approach is based upon an insight into the constraints in \eqref{eq:guestrin_ALP}, and utilizes the constraint simplification of \eqref{eq:delgado_factored_MDP} to efficiently and exactly run a linear program to solve for the optimistic solution to the imprecise FMDP defined over our confidence sets.

Let $\mathcal{R}_t^i(d_t^{R_i})$ and $\mathcal{P}_t^{j}(d_t^{P_{j}})$, the reward function and transition function confidence sets at the $t$th time step, be given. Our goal is to generate an $\epsilon$-optimal planner which returns the optimistic solution to the \textit{set} of MDPs given by these confidence sets. Formally, at the $k$th episode of our procedure we would like the optimistic solution to the set of MDPs $M_k$ given as follows.
\begin{equation}
    \label{eq:mdp_set}
    \mathcal{M}_k = \{M | \overline{R}_i \in \mathcal{R}_t^i(d_t^{R_i}), P_j \in \mathcal{P}_t^{j}(d_t^{P_{j}})\ \forall i, \forall j\}
\end{equation}
Where $\overline{R}_i$ is the expected reward of the $i$th $\sigma$-subgaussian factored reward function.

Combining the formulations of \cite{guestrin} and \cite{delgado}, we then obtain the LP formulation for our problem \textbf{Fig}.~\ref{fig:lp_factored}.

\subsection{Constructing a Separation Oracle}
\label{appx:variable_elimination}
Consider the stated separation oracle objective.
\begin{align}
    \begin{split}
    \label{eq:suppl_oracle_obj}
    &0 \geq \myquad[2] 
    \underset{\mathclap{\substack{
    \ell \in [\tau], 
    s \in \mathcal{S}, \overline{R}_i \in \mathcal{R}_t^i\\ 
    P_j(\cdot | s[\text{Pa}(Z_j^h)], a) \in \mathcal{P}_t^j
    }
    }}{\max}\myquad[3]
    \left[ 
        \sum_{i=1}^l \overline{R}_i(s,a)
        + \sum_{j=0}^\phi 
        \left( - w^{(\ell)}_j h_j(s) \right. \right. 
            \left. \left.
            + w^{(\ell+1)}_j \sum_{\hat{s}' \in \text{Val}(Z_j^h)} h_j(\hat{s}') 
            P^{(\ell+1)}_j(\hat{s}'| s[\text{Pa}(Z_j^h)], a)
        \right)  
    \right]
    \end{split}
\end{align}

Notice that maximizing over $s \in \mathcal{S}$ is the same as maximizing over $\mathcal{S}_1, \dots, \mathcal{S}_m$ individually as the state space is factored.
We can then apply the methods from \cite{delgado} and \cite{guestrin} to simplify the maximization procedure. Checking whether \eqref{eq:oracle_obj} is satisfied can be done in two steps, first by solving the exponential sized LP given on the RHS for each $a$, and second comparing the maximum over all $a \in \mathcal{A}$ to 0. 
We will focus on the first step, since the second is trivial. We can group and rewrite the program as follows.
\begin{equation}
    \label{eq:simplified_oracle_obj}
    \underset{A}{\max} \bigg[ \sum_{i=1}^l \overline{R}_i(x[Z_i^R]) + \sum_{j=0}^\phi c^{(\ell)}_j(s, a) \bigg]
\end{equation}
Where $A$ is $\mathcal{S}_1, \dots, \mathcal{S}_m, \overline{R}_i \in \mathcal{R}_t^i, P(\cdot | s[\text{Pa}(Z_j^h)], a) \in \mathcal{P}_t^j\  \forall i = 1 \dots l\ \forall j = 0\dots \phi, \forall \ell = 1,\dots, \tau $, the cartesian product of states, confidence sets for rewards, and confidence sets for marginal distributions. 
Furthermore, $x = (s,a)$ is scoped on the $i$th reward scope, and $c^{(\ell)}_j$ is defined as:
$$
- w^{(\ell)}_j h_j(s)  
            + w^{(\ell+1)}_j \sum_{\hat{s}' \in \text{Val}(Z_j^h)} h_j(\hat{s}') 
            P^{(\ell+1)}_j(\hat{s}'| s[\text{Pa}(Z_j^h)], a). 
$$
Without loss of generality, we will only use one $c_j(s,a)$ to demonstrate the variable elimination, because the variable elimination order is only controlled by the scopes $Z^h_j$ indexed by $j$, so procedure is the same for each $c^{(\ell)}_j(s,a)$.

We will use variable elimination to reduce the \eqref{eq:simplified_oracle_obj} to a tractable linear program. 
Let some order criterion $\mathcal{O}$ over $1 \dots m$ be given, where $\mathcal{O}(k)$ returns a variable to eliminate at time step $k = 1\dots m$. Note that determining the optimal order $\mathcal{O}^*$ is in general NP-hard. 
At each iteration of variable elimination, we will bring the relevant state variable $\mathcal{S}_k$ inside the max. 
\textbf{Algorithm \ref{algo:simplify_oracle_obj}} gives the full description of our proposed simplification, heavily based on \cite{delgado} and \cite{guestrin}.

To illustrate the variable elimination procedure, we will work through the hypothetical example from \cite{delgado} while noting differences along the way. Suppose that $\mathcal{O}(1) = \mathcal{S}_1$ at the first iteration of simplification, and that the only scopes $Z_i^R$ and $\text{Pa}(Z_j^h)$ including $\mathcal{S}_1$ are $Z_1^R = \mathcal{S}_1$ and $\text{Pa}(Z_1^h) = \mathcal{S}_1 \times \mathcal{S}_4$. Here, the function $c_1$ is scoped on $\text{Pa}(Z_1^h)$ due to the transition function being backprojected for simplification earlier (see \eqref{eq:backprojection}). Therefore, we can rewrite \eqref{eq:simplified_oracle_obj} as follows due to linearity of the objective.
\begin{equation}
    \label{eq:example_simplification}
    \underset{A}{\max} \bigg[ \sum_{i=2}^l \overline{R}_i(x[Z_i^R]) + \sum_{j=0, 2\dots \phi}  c_j(s, a) + \underset{\mathcal{S}_1, \overline{R}_1 \in \mathcal{R}_t^1, P(\cdot | s[\text{Pa}(Z_1^h)], a) \in \mathcal{P}_t^1}{\max}\Big[ \overline{R}_1(x[Z_1^R]) +  c_1(s,a) \Big] \bigg]
\end{equation}
Where $A$ is as before, but with $\mathcal{S}_1$, $l=1$, and $j=1$ removed: $A = \mathcal{S}_2, \dots, \mathcal{S}_m,, \overline{R}_i \in \mathcal{R}_t^i, P(\cdot | s[\text{Pa}(Z_j^h)], a) \in \mathcal{P}_t^j\  \forall i = 2\dots l\ \forall j =0, 2 \dots \phi$. In general, we will have $L$ relevant functions to pull into the second max each iteration, which we will rename as $u_{Z_1}^{f_1}, \dots, u_{Z_L}^{f_L}$. In our example, we have that $u_{\mathcal{S}_1, a}^{f_1} = \overline{R}_1(x[Z_1^R])$ and $u_{\mathcal{S}_1, \mathcal{S}_4}^{f_2} =  c_1(s,a)$.

For each variable $\mathcal{S}_k$ we wish to eliminate, we select the $L$ relevant functions and replace them with a maximization over $\mathcal{S}_k$ as follows. Here we diverge from \cite{delgado} since they need only maximize over $\mathcal{S}_k$, but we still have a maximization over $R, P$.
\begin{equation}
    u_Z^{e_r} = \underset{\mathcal{S}_k, \mathcal{R}_t^i, \mathcal{P}_t^j}{\max} \sum_{j=1}^L u_{Z_i}^{f_j}
\end{equation}
Where $Z$ is the union of all variables appearing in any scope $Z_i$ setminus the variable $\mathcal{S}_k$, since we maximize it out. Note that there may be none or any number of relevant reward and marginal distribution functions (within $c$) in a single $u_Z^{e_r}$, and we must include all relevant confidence sets within the maximization. Each confidence set will belong only to the relevant $u_Z^{e_r}$ which is the first to pull it out of the larger max in \eqref{eq:simplified_oracle_obj} according to the elimination order criterion $\mathcal{O}$. Note that $u_Z^{e_r}$ is a \textit{new variable} which we add to the optimization procedure.

For ease of notation, for the factored reward functions we will only refer to the state variables within their scope, since the action must be included in the scope anyways. Returning to the example, our $Z$ will be $\{\mathcal{S}_1\} \cup \{ \mathcal{S}_1, \mathcal{S}_4\} \setminus \{ \mathcal{S}_1 \}$. So we have that
\begin{equation}
    \label{eq:def_u_er}
    u_{\mathcal{S}_4}^{e_r} = \underset{\mathcal{S}_1, \overline{R}_1 \in \mathcal{R}_t^1, P(\cdot | s[\text{Pa}(Z_1^h)], a) \in \mathcal{P}_t^1}{\max}\Big[ u_{\mathcal{S}_1}^{f_1} + u_{\mathcal{S}_1, \mathcal{S}_4}^{f_2} \Big],
\end{equation}
and we can then rewrite \eqref{eq:example_simplification} as
\begin{equation}
    \underset{A}{\max} \bigg[ \sum_{i=2}^l \overline{R}_i(x[Z_i^R]) + \sum_{j=0, 2\dots \phi} c_j(s, a) + u_{\mathcal{S}_4}^{e_r} \bigg],
\end{equation}
with $A = \mathcal{S}_2, \dots, \mathcal{S}_m,, \overline{R}_i \in \mathcal{R}_t^i, P(\cdot | s[\text{Pa}(Z_j^h)], a) \in \mathcal{P}_t^j\  \forall i = 2\dots l\ \forall j =0, 2 \dots \phi$. However, to enforce the definition of $u_{\mathcal{S}_1}^{e_r}$ in \eqref{eq:def_u_er}, we need four new inequality constraints, one for each combination of $\mathcal{S}_1$ and $\mathcal{S}_4$ (in the binary state variable case):
\begin{align}
    u_{s_4}^{e_r} &\geq u_{s_1}^{f_1} + u_{s_1, s_4}^{f_2}, \\
    u_{\overline{s_4}}^{e_r} &\geq u_{s_1}^{f_1} + u_{s_1, \overline{s_4}}^{f_2}, \\
    u_{s_4}^{e_r} &\geq u_{\overline{s_1}}^{f_1} + u_{\overline{s_1}, s_4}^{f_2}, \\
    u_{\overline{s_4}}^{e_r} &\geq u_{\overline{s_1}}^{f_1} + u_{\overline{s_1}, \overline{s_4}}^{f_2}.
\end{align}
Furthermore, we need to also consider the relevant confidence sets $\mathcal{R}_t^1$ and $\mathcal{P}_t^1$. For example, consider $u_{\overline{s_1}}^{f_1} = \underset{\mathcal{R}_t^1}{\max}\  \overline{R}_1(\overline{s_1}, a)$. The appropriate confidence set $\mathcal{R}_t^1$ has width based on how many times the pair $\overline{s_1}, a$ has been observed up to time $t$. Note that $\overline{s_1}$ here refers only to the value of the first state variable in the state vector (which is set to zero), the rest of the state values are arbitrary. However, since we are maximizing we can exactly set $\overline{R}_1(\overline{s_1}, a)$ to the maximum value in the confidence set given by:
\begin{align}
     \overline{R}_1(\overline{s_1}, a) &= \hat{f_t}(\overline{s_1}, a) + \sqrt{\frac{d_t}{n_t(\overline{s_1}, a)}} \\
     &= \frac{1}{n_t(\overline{s_1}, a)} \sum_{\tau < t; x_\tau = x} \delta{y_\tau} + \sqrt{\frac{d_t}{n_t(\overline{s_1}, a)}}
\end{align} in $O(1)$ time. In general, we can compute $\overline{R}_i$ for any assignment $z \in \text{Val}(Z_i^R)$ in $O(1)$ time as follows:
\begin{equation}
    \label{eq:suppl_reward_fn_opt}
    \overline{R}_i(z) = \frac{1}{n_t(z)} \sum_{\tau < t; x_\tau = x} \delta{y_\tau} + \sqrt{\frac{d_t}{n_t(z)}}
\end{equation}
Similarly, we must optimize for each assignment $z \in \text{Val(Pa}(Z_j^h))$, for example, $u_{s_1, s_4}^{f_2} = \underset{\mathcal{P}_t^1}{\max}\ c_1(s_1, s_4, a)$, where $s_1=1$ and $s_4=1$ is given. We can optimize for $c_j$ w.r.t some assignment $z$ by \textbf{Algorithm}~\ref{algo:trans_fn_opt}, similar to Figure 2 of \cite{jaksch} and originally given by \cite{strehl2008}. A full proof is given in \cite{jaksch}.

\begin{algorithm}[tb]
\caption{Transition Function Optimization}
\label{algo:trans_fn_opt}
\begin{algorithmic}
\STATE Optimal marginal transition function $P^{(\ell)}(\cdot | z,a)$ is returned for some assignment $z \in \text{Val(Pa}(Z_j^h))$.
\STATE \textbf{Sort} $S = \text{Val}(Z_j^h) = \{s_1', \dots, s_k' \}$ in descending order s.t. $h_j(s_1') \geq \dots \geq h_j(s_k')$. Reverse  order if $w^{(\ell)}_j < 0$. 
\STATE \textbf{Set} $P^{(\ell)}(s_1' | z, a) \coloneqq \min \{1, \hat{P}(s_1' | z,a) + \frac{1}{2}\sqrt{\frac{d_t}{n_t(z,a)}} \}$ 
\STATE \textbf{Set} $P^{(\ell)}(s_j' | z, a) \coloneqq \hat{P}(s_j' | z, a) \text{ for all states } s_j' \text{ s.t. } j > 1.$
\STATE \textbf{Set} $i \coloneqq k$ 
\WHILE{$\sum_{s_j' \in S} P^{(\ell)}(s_j') > 1$}
    \STATE \textbf{Reset} $P^{(\ell)}(s_i'|z, a) \coloneqq \max \{ 0, 1 - \sum_{s_j' \neq s_i'} P^{(\ell)}(s_j' | z, a)\}$ 
    \STATE \textbf{Set} $i \coloneqq i - 1$
\ENDWHILE
\end{algorithmic}
\end{algorithm}



\begin{lemma}
\label{lemma:remove_bounded_MDP}
For all $\mathbf{w}$, we can precompute each function $\overline{R}_i$ and $c_j$ to remove the bounded nature of our MDP in polynomial time.
\end{lemma}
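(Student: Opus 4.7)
The plan is to show that once the confidence sets $\mathcal{R}_t^i$ and $\mathcal{P}_t^j$ are fixed for episode $k$, we can build in polynomial time a lookup table whose entries determine $\overline{R}_i$ and $c_j^{(\ell)}$ at any point needed by variable elimination, and that queries to this table do not require repeating any optimization over the confidence sets. This removes the bounded-confidence-set structure from the LP at oracle query time and reduces it to a polynomial-size linear system.

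First I would establish the claim for the reward functions. Each $\overline{R}_i$ depends only on $x[Z_i^R]$, and by the scope bound $|Z_i^R|\leq\zeta$ there are at most $O(|\mathcal{A}|\,\kappa^\zeta)$ distinct arguments per $i$. For each such argument $z$ the optimistic value is given in closed form by \eqref{eq:suppl_reward_fn_opt} as the empirical mean plus the confidence half-width $\sqrt{d_t/n_t(z)}$, which is computable in $O(1)$ and, importantly, is independent of $\mathbf{w}$. Tabulating these values across all $i$ and all $z$ yields a $\mathbf{w}$-independent table of polynomial size, built in polynomial total time.

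Next I would handle the more delicate case of $c_j^{(\ell)}(s,a)$. Its first summand $-w_j^{(\ell)} h_j(s)$ depends only on $s[Z_j^h]$ and is linear in $w_j^{(\ell)}$, so I would tabulate the values $h_j(s[Z_j^h])$ for all $s[Z_j^h]\in\mathrm{Val}(Z_j^h)$. The second summand involves the optimistic marginal $P_j^{(\ell+1)}(\cdot\mid s[\mathrm{Pa}(Z_j^h)],a)$ produced by \textbf{Alg.}~\ref{algo:trans_fn_opt}. The crucial observation is that this greedy procedure orders $\mathrm{Val}(Z_j^h)$ only by the values of $h_j$, reversing the ordering when $w_j^{(\ell+1)}<0$, while the subsequent mass-reallocation steps depend only on the confidence-set geometry. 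Hence the optimistic $P_j^{(\ell+1)}$ depends on $\mathbf{w}$ only through $\mathrm{sign}(w_j^{(\ell+1)})\in\{-1,+1\}$. For each $(j,a,s[\mathrm{Pa}(Z_j^h)])$ and each of the two possible signs I would compute and store the scalar $\sum_{\hat{s}'}h_j(\hat{s}')\,P_j^{(\ell+1)}(\hat{s}'\mid s[\mathrm{Pa}(Z_j^h)],a)$; the polynomial scope bound gives $O(\phi\,|\mathcal{A}|\,\kappa^\zeta)$ total entries, each constructible in polynomial time by \textbf{Alg.}~\ref{algo:trans_fn_opt}.

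Putting these together: for any query $\mathbf{w}$ and any $(s,a,\ell)$ accessed during variable elimination, $\overline{R}_i(x[Z_i^R])$ is available by direct lookup, and $c_j^{(\ell)}(s,a)$ is obtained by combining the tabulated $h_j$ value (scaled by $-w_j^{(\ell)}$) with the tabulated inner sum (scaled by $w_j^{(\ell+1)}$), using $\mathrm{sign}(w_j^{(\ell+1)})$ to select the appropriate stored marginal. Thus no optimization over confidence sets is performed at query time, and the imprecise MDP has effectively been replaced by a fixed instance whose parameters are retrievable in $O(1)$. The one point to nail down carefully is the claim that the argmax over each $P_j$ confidence set depends on $\mathbf{w}$ only through the sign pattern of the corresponding $w_j^{(\ell+1)}$; this is the main obstacle and I would justify it both from the explicit form of \textbf{Alg.}~\ref{algo:trans_fn_opt} and from the standard fact that a linear objective over a convex polytope is maximized at a vertex determined only by the direction, not the magnitude, of the objective vector.
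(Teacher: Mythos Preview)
Your proposal is correct and follows essentially the same approach as the paper's proof: iterate over the polynomially many scope assignments for each $\overline{R}_i$ and each $c_j$, compute the optimistic reward in closed form and the optimistic transition marginal via \textbf{Alg.}~\ref{algo:trans_fn_opt} for each of the two possible signs of $w_j^{(\ell+1)}$, and store everything in a lookup table. Your write-up is in fact somewhat more explicit than the paper's (you separate the two summands of $c_j^{(\ell)}$ and spell out why only the sign of $w_j^{(\ell+1)}$ matters), but the argument is the same.
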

\begin{proof}
Let $\mathbf{w}$ be fixed and given. Assume that some $\overline{R_i}$ has restricted scope $Z_i^R$. For a given $s,a$ pair, we know that $\overline{R}_i \in \mathcal{R}_t^i$ since the width of the confidence set $\mathcal{R}_t^i$ depends on the $s,a$ pair scoped on $Z_i^R$. However, the scope $Z_i^R$ can take only a polynomial number of different assignments. Therefore, we can iterate over all assignments $z \in \text{Val}(Z_i^R)$ and compute the maximum $\overline{R}_i$ for each. Since $\overline{R}_i$ is a single dimensional value, the maximum takes exactly the form \eqref{eq:reward_fn_opt}.

We can do a similar procedure for each $c_j$, which is scoped on $\text{Pa}(Z_j^h)$, although optimization here is multidimensional. By iterating over all $\text{Val}(\text{Pa}(Z_j^h))$, we can solve the optimization problem given by \eqref{eq:trans_fn_opt} independently for both possible signs of $\mathbf{w}$.

Since the number of confidence sets is polynomial, and solving over each is a polynomial time operation, we can remove the ``imprecise" nature of our MDP in polynomial time by explicitly optimizing for the transition and reward functions.
\end{proof}

\begin{algorithm}
\caption{Separation Oracle Objective Simplification}
\begin{algorithmic}
\label{algo:simplify_oracle_obj}
\STATE Optimal objective value \eqref{eq:simplified_oracle_obj} for a fixed action $a$ is returned.
\STATE // \textit{Data structure for constraints of LP\\}
\STATE \textbf{Let} $\Omega = \{\}$ \;
\STATE // \textit{Data structure for functions generated by variable elimination\\}
\STATE \textbf{Let} $\mathcal{F} = \{ \}$ \;
\STATE // \textit{Generate equality constraints using lookup over pre-computed confidence set values \\}
\FOR{$j = 1 \dots \phi$}{
    \FOR{each assignment $z \in \text{Val(Pa}(Z_j^h))$}{
    \STATE Create a new LP variable $u_z^{f_j}$ and add the constraint to $\Omega$:
    \begin{equation*}
        u_z^{f_j} = \underset{\mathcal{P}_t^j}{\max}\ c_j (z, a)
    \end{equation*}
    \STATE Plug in RHS from lookup table generated by \textbf{Algorithm}~\ref{algo:trans_fn_opt}. \\
    \STATE \textbf{Store} new function $f_j$ to be used in variable elimination step: $\mathcal{F} = \mathcal{F} \cup \{ f_j \}$.
    }
    \ENDFOR
 }
\ENDFOR
\FOR{$i=1 \dots l$}{
    \FOR{each assignment $z \in \text{Val}(Z_i^R)$}{
    \STATE Create a new LP variable $u_z^{f_i}$ and add the constraint to $\Omega$:
    \begin{equation*}
        u_z^{f_i} = \underset{\mathcal{R}_t^i}{\max}\ \overline{R}_i (z, a)
    \end{equation*}
    \STATE Plug in RHS from lookup table generated by \eqref{eq:reward_fn_opt}. \\
    \STATE \textbf{Store} new function $f_i$ to be used in variable elimination step: $\mathcal{F} = \mathcal{F} \cup \{ f_i \}$.
    }
    \ENDFOR
}
\ENDFOR

// \textit{Now, $\mathcal{F}$ and $\Omega$ contain all the functions and constraints we need to construct the simplified objective using variable elimination.\\}
\FOR{$i=1 \dots m$}{
    \STATE // \textit{Next variable to be eliminated\\}
    \STATE \textbf{Let} $l = \mathcal{O}(i)$ \;
    \STATE // \textit{Select the relevant functions from $\mathcal{F}$\\}
    \STATE \textbf{Let} $e_1, \dots, e_L$ be the functions in $\mathcal{F}$ whose scope contains $\mathcal{S}_l$, and let $Z_j = \textit{Scope}[e_j]$.\\
    \STATE // \textit{Introduce linear constraints for maximum over current variable $\mathcal{S}_l$\\}
    \STATE \textbf{Define} A new function $e$ with scope $Z = \cup_{j=1}^L Z_j - \{ \mathcal{S}_l \}$ to represent $\max_{s_l} \sum_{j=1}^L e_j$.\\
    \STATE // \textit{Add constraints $\Omega$ to enforce maximum.\\}
    \FOR{each assignment $z \in \text{Val}(Z)$}{
        \STATE \textbf{Add} constraints to $\Omega$ to enforce max:
        \begin{equation*}
            u_z^e \geq \sum_{j=1}^L u_{(z, s_l)[Z_j]}^{e_j} \quad \forall s_l 
        \end{equation*}
    }
    \ENDFOR
    \STATE // \textit{Update set of functions.\\} 
    \STATE $\mathcal{F} = \mathcal{F} \cup \{ e \} \setminus \{ e_1, \dots, e_L\}$
}
\ENDFOR
\STATE // \textit{Now, all variables have been eliminated and all functions have empty scope.\\}
\STATE \textbf{Let} $\kappa$ be the objective value at the solution of the following LP:
\begin{equation}
    \label{eq:obj_minimization}
    \begin{aligned}
        \min_{j=1 \dots |\mathcal{F}|} \quad & \sum_{e_j \in \mathcal{F}} u_{z_j}^{e_j} \\
        \textrm{s.t.} \quad & \Omega \\
    \end{aligned}
\end{equation}
\STATE \textbf{Return} $\kappa$.
\end{algorithmic}
\end{algorithm}

\subsection{Separation Oracle Proofs}
\label{appx:oracle_proof}
We will prove that this reduction is tight, and that we can extract a state $s$ where the constraint is violated if $\mathbf{w}$ lies outside the feasible set.

\begin{lemma}
\label{lemma:omega}
Minimizing \eqref{eq:obj_minimization} will return a polynomial sized set of tight constraints $\omega \subseteq \Omega$ if $\kappa > 0$ where $\kappa$ is the objective value at the solution of the LP in \eqref{eq:obj_minimization}.
\end{lemma}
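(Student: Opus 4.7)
\textit{Proof proposal for Lemma \ref{lemma:omega}.} The plan is to exhibit $\omega$ by tracing a chain of tight constraints backward through the variable-elimination DAG, starting from the terms that appear in the objective of \eqref{eq:obj_minimization} and recursing into the summands that appear on their right-hand sides. Every LP variable $u^e_z$ introduced by Algorithm~\ref{algo:simplify_oracle_obj} appears only as the left-hand side of a family of ``$\geq$'' inequalities (it has no upper bound), and many additionally appear as nonnegative summands on the right-hand side of inequalities generated when an \emph{earlier} state variable was eliminated. Fix any basic (vertex) optimum of \eqref{eq:obj_minimization}. For each variable whose value makes a strictly positive contribution to the overall objective, at least one of its defining lower-bound constraints must be tight at this optimum; otherwise we could strictly decrease the variable without breaking feasibility and improve the objective, contradicting optimality.

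The construction is as follows. Initialize $\omega = \emptyset$. Because $\kappa = \sum_{e_j \in \mathcal{F}} u^{e_j}_{z_j} > 0$, at least one of the empty-scope objective variables $u^{e_j}_{z_j}$ is positive; for each positive one, pick a constraint of the form $u^{e_j}_{z_j} \geq \sum_{k=1}^{L} u^{g_k}_{(z_j,s_l)[Z_k]}$ (generated by the elimination step that introduced $u^{e_j}_{z_j}$) which is tight at the optimum, and add it to $\omega$. Each summand on the right-hand side is either (a) a leaf variable $u^{f_i}_z$ pinned by \eqref{eq:reward_fn_opt} or by Algorithm~\ref{algo:trans_fn_opt}, or (b) a variable $u^{g_k}_{\cdot}$ introduced at an earlier elimination step. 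For each case-(b) summand with a positive value, recurse: select one of its own tight lower-bound constraints and add it to $\omega$. Since each elimination step strictly lowers the ``level'' of the variables being considered, the recursion terminates once every selected summand is a leaf.

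The total size of $\omega$ is at most the number of distinct non-leaf variables visited during this recursion, which is bounded by the number of LP variables introduced by Algorithm~\ref{algo:simplify_oracle_obj}. Under the assumed efficient elimination ordering of the induced cost network, this count is polynomial (the same bound underlying the runtime analysis), so $|\omega|$ is polynomial as claimed. The main technical obstacle is the step that asserts ``some lower-bound constraint is tight'' for every positive-contribution variable; in an LP with repeated lower bounds this can fail at a non-vertex optimum (multiple summands could slacken together while the LHS stays fixed). This is precisely why we pass to a basic feasible optimum: at a vertex of the feasible polytope, the number of active constraints equals the number of variables, and for an LP with only ``$\geq$'' constraints this forces every variable to have at least one binding lower bound. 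Once this property is secured, the recursive selection goes through mechanically, and reading off the $s_l$ values attached to each chosen tight constraint reconstructs the concrete state $s^*$ used in Theorem~\ref{them:oracle_poly} to produce a violated original constraint and hence a separating hyperplane.
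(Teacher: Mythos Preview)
Your overall strategy---trace a chain of tight constraints from the objective variables down through the variable-elimination DAG---is essentially what the paper does, though the paper states it tersely (arguing tightness only for the top-level $u^{e_j}_{z_j}$ and then deferring the recursion to Thm.~4.4 of \cite{guestrin}). The polynomial bound on $|\omega|$ follows, as you say, from the polynomial size of $|\Omega|$.

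However, your justification for why each recursed-into summand has a tight lower bound is incorrect. The claim that at a basic feasible optimum of an LP with only ``$\geq$'' constraints \emph{every} variable has a binding lower bound is false, even in the specific DAG structure produced by variable elimination. Take leaves $a=1$, $b=2$, $c=3$, $d=4$ (each fixed by an equality), intermediate variables $v,w$ with constraints $v\ge a+b$, $v\ge c$, $w\ge d$, and a single top variable $u$ with $u\ge v$, $u\ge w$, objective $\min u$. The point $(u,v,w)=(4,4,4)$ is a vertex optimum: the three independent active constraints $u=v$, $u=w$, $w=d$ pin all three non-leaf variables. Yet both lower bounds on $v$ are slack ($4>3$). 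If your recursion from $u$ happens to pick the tight constraint $u\ge v$, you then look for a tight lower bound on $v$ and find none, so the construction of $\omega$ stalls.

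The argument that does go through avoids vertices altogether. Define $V(u^e_z)$ bottom-up by $V(\text{leaf})=\text{its constant}$ and $V(u^e_z)=\max_{s_l}\sum_k V\big(u^{g_k}_{(z,s_l)[Z_k]}\big)$; this is precisely the variable-elimination recurrence. At \emph{any} optimum each top-level $u^{e_j}$ equals $V(u^{e_j})$. Moreover, whenever a variable already equals its $V$ value, the constraint indexed by the arg-max $s_l^\star$ in the definition of $V$ is necessarily tight, and each summand there also equals its $V$ value (the sum equals the sum of the $V$'s and every term is $\ge$ its $V$). Recursing along these particular tight constraints therefore succeeds at any optimum and yields a consistent assignment $s^\star$. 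This is the content of Thm.~4.4 of \cite{guestrin} that the paper invokes; your recursive picture is right, but the tightness must be argued via this max-tracing, not via a generic vertex count.
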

\begin{proof}
Due to \textbf{Lemma}~\ref{lemma:remove_bounded_MDP}, the only difference between our algorithm and \cite{guestrin} is that instead of adding \eqref{eq:obj_minimization} as a constraint relative to $\kappa$, we explicitly minimize over it. Once we retrieve its minimum objective value, we compare that to $0$. If it is less than or equal to $0$, then our current $\mathbf{w}$ belongs in the feasible set, i.e. it satisfies the exponentially many constraints of our program by setting $\phi = 0$ in the induction proof of Theorem 4.4 of \cite{guestrin}. This follows from enforcing that each introduced variable must satisfy being at least as large as the sum of the relevant functions it represents.

Now assume that $\kappa > 0$. By minimization of a sum of LP variables, each $u_{z_j}^{e_j}$ must be tight on at least one constraint by construction, given by an assignment to some subset of variables. Add this constraint to $\omega$ for each $j = 1 \dots |\mathcal{F}|$. Since $|\Omega|$ is poly$(m)$ by \cite{guestrin}, so is $\omega \subset \Omega$.
\end{proof}

A \textit{strong} oracle is an oracle which returns either the point given to it if the point lies in the solution set, or a separating halfspace / hyperplane which completely contains the feasible solution set and does not contain the query point.

We restate \textbf{Thm.~}\ref{them:oracle_poly} from the main text, and provide a proof:

\textbf{Theorem~\ref{them:oracle_poly}.} Given an efficient variable elimination ordering over the induced cost network, a polynomial-time (strong) separation oracle exists.
\begin{proof}
\label{proof:oracle_poly}
For each action $a$, run \textbf{Algorithm}~\ref{algo:simplify_oracle_obj}. Take the maximum objective value $\kappa^*$ of \eqref{eq:obj_minimization} over all actions $a$. If $\kappa^* \leq 0$, then $\mathbf{w}$ lies in the set described by the exponential number of state constraints. If $\kappa^* > 0$, then we have a set of tight constraints $\omega$ given by \textbf{Lemma}~\ref{lemma:omega}, since $\kappa^*$ is exactly the $\kappa$ for some action $a$. Any state $s = (s_1, \dots, s_m)$ which is consistent with assignments within the tight constraints $\omega$ will be a violating constraint in \eqref{fig:lp_factored}. This is due to the fact that the simplified tight constraint, when $\kappa^* > 0$, represents an $s,a$ constraint violation in the original formulation \eqref{eq:simplified_oracle_obj}. 

We can then use the $s, a$ and appropriately optimize for each $\overline{R}_i$ and $P$ marginal described by this violating constraint as a separating hyperplane in terms of $\mathbf{w}$ as follows:
\begin{equation}
     hp(\mathbf{w}) 
     = \sum_{i=1}^l \overline{R}_i(s,a) 
     + \sum_{j=0}^\phi  
     \left(
     - w^{(\ell)}_j h_j(s) 
        + \sum_{\hat{s}' \in \text{Val}(Z_j^h)} 
        w^{(\ell+1)}_j h_j(\hat{s}') 
        P^{(\ell+1)}\left(\hat{s}'| s[\text{Pa}(Z_j^h)], a\right)
     \right)
\end{equation}
\end{proof}

\subsection{Convergence of Ellipsoid Method}
\label{appx:ellipsoid_convergence}
\textbf{Theorem~\ref{them:planner_result}.} The Ellipsoid algorithm solves
the optimization problem \textbf{Fig}.~\ref{fig:lp_factored}
in polynomial time.

\begin{proof}
By \textbf{Theorem~6.4.9} of \cite{ellipsoid_book}, the strong optimization problem of maximizing $c^T \mathbf{w}$ over some convex set $P$ (which may require asserting that $P$ is empty) can be solved given a strong separation oracle. However, the optimization problem must be over a ``well-described polyhedron", $P$. By definition, $P$ is well described if there exists a system of inequalities with rational coefficients that has a solution set $P$ such that the encoding length of each inequality in the system is at most $\gamma$ (\textbf{Definition~6.2.2} \cite{ellipsoid_book}). 

Although our system is defined by an exponential number of state constraints \eqref{fig:lp_factored}, at the solution to the problem each reward and transition marginal function is fixed. Therefore, we can represent each inequality in binary with some bounded length $\gamma$.

We also have a strong separation oracle by \textbf{Theorem~\ref{them:oracle_poly}}: an oracle which returns either the point $\mathbf{w}_t$ if given a point in $P$ or a separating hyperplane completely containing $P$. Lastly, to apply the ellipsoid algorithm to strong optimization in polynomial time, one binary searches for the minimum objective value $d$ by solving a sequence of ellipsoid problems with $c^T \mathbf{w} \leq d_t$ added to the inequality set $P$. This also has bounded encoding length. Therefore, our polyhedron $P$ is well-described, and we can solve the strong optimization problem in polynomial time.
\end{proof}

\section{Full Regret Analysis}
\label{appx:regret_analysis}
Our analysis closely follows \citet{osband}.
The main difference is that we do not use the product transition structure as in \citet{osband} and instead use the linear basis scopes of the $V$ function. 
We begin the full regret analysis of our algorithm.
We simplify our notation by writing $*$ in place of $M^*$ or $\mu^*$, and $k$ in place of $\tilde{M}_k$ and $\tilde{\mu}_k$.
We begin by adding and subtracting the computed optimal reward. Let $s_{t_k + 1}$ be the first state in the $k$th episode. Then the regret at episode $k$ decomposes as follows.
\begin{equation}
    \Delta_k = V_{*,1}^*(s_{t_k + 1}) - V_{k,1}^*(s_{t_k + 1}) = \bigg( 
    V_{k,1}^k(s_{t_k + 1}) - V_{k,1}^*(s_{t_k + 1})
    \bigg)
    +
    \bigg(
    V_{*,1}^*(s_{t_k + 1}) - V_{k,1}^k(s_{t_k + 1})
    \bigg)
\end{equation}

The term $V_{*,1}^*(s_{t_k + 1}) - V_{k,1}^k(s_{t_k + 1})$ relates the optimal rewards of the MDP $M^*$ to those near optimal for $\tilde{M}$. 
We can bound this difference by planning accuracy $\epsilon = \sqrt{1 / k}$ by optimism. 
Indeed, any relaxation to $R$ or $P$ can only cause the computed $V^k_{k,1}$ (without planning error) to be larger than the actual $V^*_{*,1}$ because
 the argmax over our relaxed $R$ and $P$ can only make the RHS of \eqref{eq:oracle_obj} larger, which in turn makes the RHS of the inequalities in \textbf{Fig}.~\ref{fig:lp_factored} larger.
Importantly, this includes the relaxation where we don't insist that the transition marginals are consistent (in that they represent the marginals of a real distribution).
This is what allowed us to relax enforcing that the marginals are consistent within our proposed oracle.

$V^k_k$ also overestimates $V^*_k$ because $V^*_k$ is worse than $V^*_*$, which by definition uses the best $\mu^*$ instead of $\mu^k$.

We then decompose the first term by repeated application of the dynamic programming of Bellman operator \cite{osband_dp}:
\begin{equation}
    \label{eq:applying_dp}
    (V_{k,1}^k - V_{k,1}^*)(s_{t_k + 1}) = 
    \sum_{\ell=1}^\tau (\mathcal{T}_{k, \ell}^k - \mathcal{T}_{k, \ell}^*)V_{k, \ell+1}^k (s_{t_k + \ell}) + \sum_{\ell=1}^{\tau} d_{t_k + \ell},
\end{equation}
where 
$d_{t_k + \ell} \coloneqq 
\sum_{s \in \mathcal{S}} 
\bigg\{ P^*(s | x_{k,\ell}) (V_{k, \ell+1}^k - V_{k, \ell+1}^* )(s)\bigg\} 
- (V_{k, \ell+1}^k - V_{k, \ell+1}^*)(s_{t_k + \ell+1})$, 
and $x_{k, \ell} = (s_{t_k + \ell}, \mu_k(s_{t_k + \ell})) $.
The derivation is as follows:
\begin{align}
\small
    (V_{k,1}^k - V_{k,1}^*)(s_{t_k + 1}) 
    =& 
    \left( \mathcal{T}_{k, 1}^k V_{k, 2}^k - \mathcal{T}_{k, 1}^* V_{k, 2}^* \right) (s_{t_k+1}) 
    \nonumber \\
    =& \left( \mathcal{T}_{k, 1}^k V_{k, 2}^k - \mathcal{T}_{k, 1}^* V_{k, 2}^k 
+ \mathcal{T}_{k, 1}^* V_{k, 2}^k  - \mathcal{T}_{k, 1}^* V_{k, 2}^* \right) (s_{t_k+1}) 
\nonumber \\
    =& \left[  \left(\mathcal{T}_{k, 1}^k - \mathcal{T}_{k, 1}^* \right)  V_{k, 2}^k
    +  \mathcal{T}_{k, 1}^* \left( V_{k, 2}^k - V_{k, 2}^* \right) 
    \right](s_{t_k+1})
\nonumber \\
    =& \left(\mathcal{T}_{k, 1}^k - \mathcal{T}_{k, 1}^* \right) V_{k, 2}^k(s_{t_k+1})
    + \sum_{s' \in \mathcal{S}}P^*(s'|x_{k,1})\left( V_{k, 2}^k - V_{k, 2}^* \right)(s'), 
\nonumber 
\end{align}
where 
$\mathcal{T}_{k, 1}^* \left( V_{k, 2}^k - V_{k, 2}^* \right) (s_{t_k+1})
= R^*(x_{k,1}) + \sum_{s\in\mathcal{S}}P^*(s'|x_{k,1}) V_{k, 2}^k(s')
- R^*(x_{k,1}) - \sum_{s\in\mathcal{S}}P^*(s'|x_{k,1}) V_{k, 2}^*(s')$.
Continuing the derivation above, we have:
\begin{align}
    =& \left(\mathcal{T}_{k, 1}^k - \mathcal{T}_{k, 1}^* \right) V_{k, 2}^k(s_{t_k+1})
    + \sum_{s' \in \mathcal{S}}P^*(s'|x_{k,1})\left( V_{k, 2}^k - V_{k, 2}^* \right)(s') 
    \nonumber \\
    &- \left( V_{k, 2}^k - V_{k, 2}^* \right) (s_{t_k+2})
    + \left( V_{k, 2}^k - V_{k, 2}^* \right) (s_{t_k+2}) 
\nonumber \\
   =& \left(\mathcal{T}_{k, 1}^k - \mathcal{T}_{k, 1}^* \right) V_{k, 2}^k(s_{t_k+1}) + d_{t_k+1} 
   + \left( V_{k, 2}^k - V_{k, 2}^* \right) (s_{t_k+2}) 
\nonumber \\
   =& \left(\mathcal{T}_{k, 1}^k - \mathcal{T}_{k, 1}^* \right) V_{k, 2}^k(s_{t_k+1}) + d_{t_k+1} 
   + \left( \mathcal{T}_{k, 2}^k V_{k, 3}^k - \mathcal{T}_{k, 2}^* V_{k, 3}^* \right) (s_{t_k+2}) 
   \nonumber \\
   =& \ \dots 
   \nonumber \\
   =&
   \sum_{\ell=1}^\tau (\mathcal{T}_{k, \ell}^k - \mathcal{T}_{k, \ell}^*)V_{k, \ell+1}^k (s_{t_k + \ell}) + \sum_{\ell=1}^{\tau} d_{t_k + \ell}.
   \nonumber
\end{align}
Note that we can apply $V_{k,\ell}^* = \mathcal{T}_{k, \ell}^* V_{k,\ell+1}^*$ because here we are applying the action of $\mu^k$ to the actual environment of $M^*$, and 
$V_{k,\ell}^k =   \mathcal{T}_{k, \ell}^k V_{k,\ell+1}^k$ because at the optimal solution, the LP constraints in \textbf{Fig}.~\ref{fig:lp_factored} are tight:
\begin{align}
V_{k,\ell}^k (s_{t_k + \ell})
=&      
\sum_{j=0}^\phi w^{k,(\ell)}_{k,j} h_j(s_{t_k + \ell})
\nonumber \\
= & 
\sum_{i=1}^l \overline{R}^k_i(s_{t_k + \ell},\mu^k(s_{t_k + \ell})) 
+  \sum_{j=0}^\phi \sum_{\hat{s}' \in \text{Val}(Z_j^h)} \!\!\!\!\!\!
    w^{k,(\ell+1)}_{k,j} h_j(\hat{s}') 
    P^{k,(\ell+1)}_j(\hat{s}'| s_{t_k + \ell}[\text{Pa}(Z_j^h)], \mu^k(s_{t_k + \ell}))
\nonumber \\
= & 
\overline{R}^k(s_{t_k + \ell},\mu^k(s_{t_k + \ell})) 
+ 
\sum_{s' \in \mathcal{S}} P^{k,(\ell+1)}(s' | x_{k,\ell}) V_{k,\ell+1}^k (s')
\nonumber \\
=& 
\mathcal{T}_{k, \ell}^k V_{k,\ell+1}^k (s_{t_k + \ell}).
\nonumber
\end{align}
\begin{lemma}
\label{lemma:martingale}
The quantity $d_{t_k}$ is a bounded martingale difference.
\end{lemma}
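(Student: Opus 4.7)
The plan is to verify the two defining properties of a bounded martingale difference with respect to the natural filtration of the process.

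First, I would introduce the filtration $\{\mathcal{F}_t\}_t$ generated by the observed history $H_t$ through time $t$, augmented with any internal randomness the learner used to choose $\pi_k$ at the start of episode $k$. Conditioned on $\mathcal{F}_{t_k+\ell}$, the following quantities are deterministic: the state $s_{t_k+\ell}$; the chosen action $\mu_k(s_{t_k+\ell})$, and hence $x_{k,\ell}$; the optimistic model $\tilde{M}_k$ produced by $\operatorname{OptimisticPlanner}$ and the corresponding weights $w^{k,(\ell+1)}$, so that $V_{k,\ell+1}^k$ is $\mathcal{F}_{t_k+\ell}$-measurable; and the policy $\mu_k$, so that $V_{k,\ell+1}^* = V_{\mu_k,\ell+1}^{M^*}$ (a function of $\mu_k$ and $M^*$ alone) is also $\mathcal{F}_{t_k+\ell}$-measurable. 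The only remaining randomness in $d_{t_k+\ell}$ is the transition $s_{t_k+\ell+1}\sim P^*(\cdot\mid x_{k,\ell})$.

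Next I would compute the conditional expectation directly. Since $(V_{k,\ell+1}^k - V_{k,\ell+1}^*)$ is measurable given $\mathcal{F}_{t_k+\ell}$,
\begin{align*}
\mathbb{E}\bigl[\,(V_{k,\ell+1}^k - V_{k,\ell+1}^*)(s_{t_k+\ell+1})\,\big|\,\mathcal{F}_{t_k+\ell}\bigr]
= \sum_{s\in\mathcal{S}} P^*(s\mid x_{k,\ell})\,(V_{k,\ell+1}^k - V_{k,\ell+1}^*)(s),
\end{align*}
which is exactly the first term in the definition of $d_{t_k+\ell}$. Hence $\mathbb{E}[d_{t_k+\ell}\mid \mathcal{F}_{t_k+\ell}] = 0$, establishing the martingale-difference property.

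For boundedness, I would bound $|V_{k,\ell+1}^k(s) - V_{k,\ell+1}^*(s)|$ uniformly in $s$. By the assumed linear factored form of the optimistic value function and H\"older's inequality,
\begin{align*}
|V_{k,\ell+1}^k(s)| = \Bigl|\textstyle\sum_{j=0}^\phi w_{k,j}^{k,(\ell+1)} h_j(s[Z_j^h])\Bigr|
\le \|\mathbf{w}^{k,(\ell+1)}\|_1 \max_{j,\,s}|h_j(s[Z_j^h])| \le B_{w,h}.
\end{align*}
Because rewards are non-negative, $V_{k,\ell+1}^*(s)\ge 0$; because the optimistic planner returns an upper bound on $V^*$, we also have $V_{k,\ell+1}^k(s) \ge V_{k,\ell+1}^*(s)$ (up to the planning slack $\epsilon=\sqrt{1/k}$, which can be absorbed). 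Combining, $0 \le V_{k,\ell+1}^k(s) - V_{k,\ell+1}^*(s) \le B_{w,h}$, so both terms in $d_{t_k+\ell}$ lie in $[0,B_{w,h}]$ and hence $|d_{t_k+\ell}| \le B_{w,h}$, which is what is used subsequently via Azuma--Hoeffding.

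The only mildly subtle point, and the one I would flag as the main obstacle, is the measurability bookkeeping: one has to be careful that $V^k_{k,\ell+1}$ is genuinely $\mathcal{F}_{t_k+\ell}$-measurable (which follows because the LP/ellipsoid procedure defining the optimistic weights only depends on history up to the start of the episode, not on the in-episode transitions), and that the bound on $V^k_{k,\ell+1}(s) - V^*_{k,\ell+1}(s)$ survives the relaxations (inconsistent marginals, per-step $P^{k,(\ell)}$) introduced in the planner. Both amount to invoking the optimism guarantee already established in the construction of $\operatorname{OptimisticPlanner}$.
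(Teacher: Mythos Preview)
Your proposal is correct and follows the same approach as the paper: verify the martingale-difference property by computing the conditional expectation, then bound $|V^k_{k,\ell+1}|$ via H\"older's inequality using $\|\mathbf{w}\|_1$ and $\max_j\max_s|h_j(s)|$. Your treatment is in fact more careful than the paper's own proof, which omits the filtration discussion and only writes out a one-sided bound on $d_{t_k+\ell}$; your explicit appeal to optimism ($V^k_{k,\ell+1}\ge V^*_{k,\ell+1}$) and nonnegativity of $V^*_{k,\ell+1}$ to obtain the two-sided bound makes explicit steps the paper leaves implicit.
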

\begin{proof}
\begin{align}
    \mathbb{E}[ d_{t_k+\ell} ] &= 
    \mathbb{E}\bigg[  \sum_{s \in \mathcal{S}} \bigg\{ P^*(s | x_{k, \ell}) (V_{k,\ell+1}^k  - V_{k,\ell+1}^*)(s)\bigg\} \bigg] 
    - \mathbb{E} \bigg[  (V_{k,\ell+1}^k 
    - V_{k,\ell+1}^*)(s_{t_k + \ell+1}) \bigg] \\
    &=  
    \bigg[ 
         \sum_{s \in \mathcal{S}} \bigg\{ P^*(s | x_{k, \ell}) (V_{k,\ell+1}^k - V_{k,\ell+1}^*)(s)\bigg\} 
    \bigg] - 
    \bigg[ 
         \sum_{s \in \mathcal{S}} \bigg\{ P^*(s | x_{k, \ell}) (V_{k,\ell+1}^k - V_{k,\ell+1}^*)(s)\bigg\}
    \bigg] = 0,
\end{align}
since the first term already takes the expectation, so $d_{t_k+\ell}$ is a martingale difference. 
Furthermore, we can show that is bounded as follows.
\begin{align}
    d_{t_k+\ell} &= \sum_{s \in \mathcal{S}} \bigg\{ P^*(s | x_{k, \ell}) (V_{k,\ell+1}^k - V_{k,\ell+1}^*)(s)\bigg\} -  
    (V_{k,\ell+1}^k - V_{k,\ell+1}^*)(s_{t_k + \ell+1}) \\
    &\leq \sum_{s \in \mathcal{S}} \bigg\{ P^*(s | x_{k, \ell}) (V_{k,\ell+1}^k - V_{k,\ell+1}^*)(s)\bigg\} \\
    \label{eq:V_k_k_i_V_star_k_i}
    &\leq \max_{s \in \mathcal{S}} (V_{k,\ell+1}^k - V_{k,\ell+1}^*)(s) \\
    \label{eq:V_k_k_i_V_star_k_i_next}
    &\leq \max_{s \in \mathcal{S}} V^k_{k,\ell+1}(s) 
    \leq 
    \max_{s \in \mathcal{S}} \left|\sum_{j=1}^\phi w^{(\ell+1)}_j h_j(s)\right| \\
    & \leq 
    \| \mathbf{w} \|_1
    \max_{s \in \mathcal{S}} \max_j | h_j(s) |
    = \| \mathbf{w} \|_1
    \max_j 
    \max_{s \in \text{Val}(Z_j^h)} |h_j (s)| 
\end{align}
The last fact is proven by H\"older's inequality. Note that in this analysis we do not use or assume a factored linear expansion of $V^*_{k, \ell+1}$.
\end{proof}

Importantly, the above bound is not dependent on the diameter of the MDP, which may be exponential in general. 
With a bounded martingale difference, we may then use the Azuma-Hoeffding inequality to obtain the following concentration guarantee \cite{osband}, \cite{jaksch}:
\begin{equation}
    \label{eq:azuma_application}
    \mathbb{P}\bigg( 
    \sum_{k=1}^{\lceil T / \tau \rceil} 
    \sum_{\ell=1}^{\tau}
    d_{t_k+\ell} > \| \mathbf{w} \|_1
    \max_j 
    \max_{s \in \text{Val}(Z_j^h)} |h_j (s)| 
    \sqrt{2  T \log( 2 / \delta )}
    \bigg) \leq \delta.
\end{equation}

The remaining first term of the RHS of \eqref{eq:applying_dp} is the one step Bellman error of the imagined MDP $\tilde{M}_k$, which depends only on observed states and actions $x_{k, \ell}$. Using Cauchy-Schwartz repeatedly we have the following.
\begin{align}
    \label{eq:T_k_i_T_star_k_i}
    &\mathrel{\phantom{=}} 
    \sum^{\tau}_{\ell=1}
    (\mathcal{T}_{k,\ell}^k - \mathcal{T}_{k,\ell}^*) 
    V_{k,\ell+1}^k (s_{t_k + \ell}) \\
    &= \sum^{\tau}_{\ell=1} 
    (\mathcal{T}_{k,\ell}^k - \mathcal{T}_{k,\ell}^*) 
    \sum_{j=1}^\phi w_{k,j}^{k, (\ell+1)} h_{j}(s_{t_k + \ell}) \\ 
    &=  \sum^{\tau}_{\ell=1}
    \left[ (\overline{R}^k(x_{k, \ell}) - \overline{R}^*(x_{k, \ell}))
    + \sum_{s' \in \mathcal{S}} P^{k,(\ell+1)} (s' | x_{k, \ell}) \sum_{j=1}^\phi w_{k,j}^{k, (\ell+1)} h_{j}(s') 
    - \sum_{s' \in \mathcal{S}} P^* (s' | x_{k, \ell}) \sum_{j=1}^\phi w_{k,j}^{k, (\ell+1)} h_{j}(s')
    \right]\\
    &\leq 
    \sum^{\tau}_{\ell=1}
    \bigg[ 
    |\overline{R}^k(x_{k, \ell}) - \overline{R}^*(x_{k, \ell})| 
    +  \sum_{j=1}^\phi \bigg| w_{k,j}^{k, (\ell+1)} \sum_{s' \in \mathcal{S}} (P^{k,(\ell+1)} (s' | x_{k, \ell}) -  P^* (s' | x_{k, \ell})) h_{j}(s') \bigg|\ \bigg] \label{eq:analysis_target}
\end{align}
Note that LHS of \textbf{Eq.}~\eqref{eq:T_k_i_T_star_k_i} does not contain $V^*_{k,\ell}$, so we don't need it to be factored linear either.
Since $x_{k,\ell} = (s_{t_k + \ell}, \mu_k(s_{t_k + \ell}))$ we can simplify further. Denote $\mu_k(s_{t_k + \ell})$ as $a_{k, \ell}$ and we have the following for the rightmost transition function term by H\"older's inequality.
\begin{align}
    \label{eq:transition_bound}
     &\mathrel{\phantom{=}} 
     \sum_{j=1}^\phi \bigg| w_{k,j}^{k, (\ell)}
     \sum_{s' \in \mathcal{S}} 
     (P^{k,(\ell)} (s' | x_{k, \ell}) -  P^* (s' | x_{k, \ell})) h_{j}(s') \bigg| \\
    &=\sum_{j=1}^\phi \bigg| w_{k,j}^{k, (\ell)}
    \sum_{s' \in \text{Val}(Z_j^h)} 
    (P^{k,(\ell)} (s' | s_{t_k + \ell}[\text{Pa}(Z_j^h)], a_{k, \ell}) 
    -  P^* (s' | s_{t_k + \ell}[\text{Pa}(Z_j^h)], a_{k, \ell})) h_{j}(s') \bigg| \\
    &\leq ||\mathbf{w}_{k}^k ||_1 
    \max_j \left| \sum_{s' \in \text{Val}(Z_j^h)} 
    (P^{k,(\ell)} (s' | s_{t_k + \ell}[\text{Pa}(Z_j^h)], a_{k, \ell}) 
    -  P^* (s' | s_{t_k + \ell}[\text{Pa}(Z_j^h)], a_{k, \ell})) h_{j}(s') \right| \\
    &\leq ||\mathbf{w}_{k}^k ||_1 \max_j 
    \bigg[ \max_{s' \in \text{Val}(Z_j^h)} 
    \bigg(|h_{j}(s')|\bigg) 
    \| P^{k,(\ell)} (\cdot | s_{t_k + \ell}[\text{Pa}(Z_j^h)], a_{k, \ell}) -  P^* (\cdot | s_{t_k + \ell}[\text{Pa}(Z_j^h)], a_{k, \ell})) \|_1 
    \bigg]\label{eq:P_fraction_bounds}
\end{align}
This shows that the one step Bellman error is bounded by the diameter of our convex set for $\mathbf{w}$ and a maximum over all basis function transition confidence set accuracy products. Finally, we can also bound the reward function term factor by factor by the triangle inequality:
\begin{align}
    \label{eq:reward_bound}
     &\mathrel{\phantom{=}}  
     |\overline{R}^k(x_{k,\ell}) - \overline{R}_i^*(x_{k,\ell})| \\
     &= | \sum_{i=1}^l 
     \overline{R}_i^k(x_{k,\ell}) - \overline{R}_i^*(x_{k,\ell}) |\\
     &\leq \sum_{i=1}^l |
     \overline{R}_i^k(x_{k,\ell}[Z_i^R]) - \overline{R}_i^*(x_{k,\ell}[Z_i^R])|.
     \label{eq:R_fraction_bounds}
\end{align}
Note that $\|P^k - P^* \|_1$ and $\|R^k - R^* \|_1$ can all be bounded due to the concentration guarantees for the confidence sets. 

\subsection{Concentration Guarantees}
We will use the guarantees provided by \cite{osband}. 
\begin{lemma}
\label{lemma:trans_fn_concentration}
For all finite sets $\mathcal{X}$, finite sets $\mathcal{Y}$, function classes $\mathcal{P} \subseteq \mathcal{P}_{\mathcal{X}, \mathcal{Y}}$, then for any $x \in \mathcal{X}$, $\epsilon > 0$ the deviation of the true distribution $P^*$ to the empirical estimate after $t$ samples $\hat{P}_t$ is bounded:
\begin{equation}
    \mathbb{P}(\|P^*(x) - \hat{P}_t(x) \|_1 \geq \epsilon)
    \leq
    \exp \bigg( |\mathcal{Y}| \log(2) - \frac{n_t(x) \epsilon^2}{2} \bigg)
\end{equation}
\end{lemma}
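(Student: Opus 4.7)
The plan is to obtain this $\ell_1$ concentration bound via the classical route of reducing the total-variation (equivalently, $\ell_1$) distance to a supremum over events, then applying Hoeffding's inequality together with a union bound. This is essentially the Weissman-style bound (a slightly loosened form of the Devroye / Weissman--Ordentlich inequality) and should match precisely the constants stated.

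First I would fix an arbitrary $x\in\mathcal{X}$ and condition on the indices $\tau_1<\dots<\tau_{n_t(x)}$ at which $x_\tau=x$. Conditionally on this event, the observed outcomes $Y_{\tau_1},\dots,Y_{\tau_{n_t(x)}}$ are i.i.d.\ draws from $P^*(\cdot\mid x)$, and $\hat{P}_t(x)$ is exactly their empirical distribution on $\mathcal{Y}$. The key identity I would then invoke is
\[
\|P^*(x) - \hat{P}_t(x)\|_1 \;=\; 2\sup_{A\subseteq\mathcal{Y}} \bigl(P^*(x)(A) - \hat{P}_t(x)(A)\bigr),
\]
which follows from splitting $\mathcal{Y}$ into the set where $P^*\ge\hat P_t$ and its complement (the supremum is attained at $A^* = \{y : P^*(x)(y) \ge \hat{P}_t(x)(y)\}$, with the two halves of the $\ell_1$ sum equal because both $P^*(x)$ and $\hat P_t(x)$ are probability measures).

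Next, for each fixed $A\subseteq\mathcal{Y}$, the quantity $\hat{P}_t(x)(A) = \frac{1}{n_t(x)}\sum_{i=1}^{n_t(x)} \mathbf{1}[Y_{\tau_i}\in A]$ is an average of $n_t(x)$ i.i.d.\ Bernoulli variables with mean $P^*(x)(A)$ and range $[0,1]$. One-sided Hoeffding therefore yields
\[
\mathbb{P}\!\left(P^*(x)(A) - \hat P_t(x)(A) \ge \tfrac{\epsilon}{2}\right) \;\le\; \exp\!\bigl(-n_t(x)\epsilon^2/2\bigr).
\]
The event $\{\|P^*(x)-\hat P_t(x)\|_1 \ge \epsilon\}$ is contained in $\bigcup_{A\subseteq\mathcal{Y}}\{P^*(x)(A)-\hat P_t(x)(A)\ge \epsilon/2\}$, so a union bound over the $2^{|\mathcal{Y}|}$ subsets of $\mathcal{Y}$ produces
\[
\mathbb{P}\!\left(\|P^*(x)-\hat P_t(x)\|_1 \ge \epsilon\right) \;\le\; 2^{|\mathcal{Y}|}\exp\!\bigl(-n_t(x)\epsilon^2/2\bigr) \;=\; \exp\!\bigl(|\mathcal{Y}|\log 2 - n_t(x)\epsilon^2/2\bigr),
\]
which is the claimed inequality.

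The only subtlety I foresee concerns the stopping-time / adaptive nature of the indices $\tau_i$ at which $x_\tau=x$: in an RL trajectory the algorithm's policy depends on the history, so $n_t(x)$ and the times of visits are random. The standard resolution, which I would invoke here, is that conditional on the full sequence of $(x_\tau)$ values, the realized transition outcomes are still i.i.d.\ from $P^*(\cdot\mid x_\tau)$, because transitions in an MDP are drawn independently given the current state-action. Hence the Hoeffding step applies for each fixed realization of $n_t(x)$, and the bound transfers to the randomized setting. Beyond that caveat, the argument is a routine application of the TV-to-event reduction plus Hoeffding plus union bound, so no further obstacles should arise.
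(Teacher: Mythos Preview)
Your proof is correct and is precisely the Weissman--Ordentlich argument that the paper invokes: the paper does not give its own proof of this lemma but simply cites \cite{osband}, who in turn attribute it to \cite{weissman}. You have supplied the standard derivation (the $\ell_1 = 2\cdot\mathrm{TV}$ identity, one-sided Hoeffding on each event, union bound over $2^{|\mathcal{Y}|}$ subsets) with the right constants, so your proposal fills in exactly what the paper leaves to citation.
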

\begin{proof}
\cite{osband} claims that this is a relaxation of a proof by \cite{weissman}.
\end{proof}

One can show \textbf{Lemma~\ref{lemma:trans_fn_concentration}} ensures that for any $x \in \mathcal{X}$ $\mathbb{P}\bigg(\|P_j^*(x) - \hat{P}_{j_t}(x) \|_1 \geq \sqrt{\frac{2 |\text{Val}(Z_j^h)|\log(2) - 2 \log(\delta')}{n_t(x)}})\bigg) \leq \delta'$. 
Note that previous analysis in \cite{osband} has a minor technical error which changes the choice of $\epsilon$ (\textbf{Appendix~\ref{appendix:osband_error}}).

The number of marginal transition function confidence sets that we have is given by $N = |\mathcal{A}| \sum_{j=1}^\phi |\text{Val(Pa}[Z_j^h])|$. Let us give them some ordering $i \in [N]$. 
Then we define a sequence for each confidence set at each episode $d_{t_k}^{P_j} = 2 |\text{Val}(Z_j^h)|\log(2) - 2 \log(\delta'_{k, i})$, where $\delta'_{k, i} = \delta / (2 N |\text{Pa}[Z_j^h]| k^2 )$. 
Now with a union bound over all confidence set events over all time steps $k$ we have that:
\begin{align}
    \label{eq:transition_fn_union_bound}
    \bigcup_{i=1}^N \bigcup_{k=1}^\infty \mathbb{P}(P_i^* \notin \mathcal{P}_t^i (d_{t_k}^{P_i})) 
    &\leq \sum_{i=1}^N \sum_{k=1}^\infty \delta'_{k, i} 
    = \sum_{i=1}^N \sum_{k=1}^\infty \frac{\delta}{2 N |\text{Pa}[Z_j^h]| k^2} \\
    &= \frac{\delta}{2 N} \frac{\pi^2}{6} \sum_{i=1}^N \frac{1}{|\text{Pa}[Z_j^h]|} 
    \leq \delta \frac{\pi^2}{12} \frac{1}{N} N \leq \delta
\end{align}
So we have that $\mathbb{P}(P_i^* \in \mathcal{P}_t^i (d_{t_k}^{P_i})\ \forall k \in \mathbb{N},\ \forall j \in [N]) \geq 1 - \delta$. 

\begin{lemma}
\label{lemma:reward_fn_concentration}
If $\{ \epsilon_i \}$ are all independent and sub $\sigma$-gaussian, then $\forall \beta \geq 0$:
\begin{equation}
    \mathbb{P}\bigg(
    \frac{1}{n} | \sum_{z=1}^n \epsilon_z| > \beta
    \bigg)
    \leq \exp \bigg(
    \log(2) - \frac{n \beta^2}{2\sigma^2}
    \bigg).
\end{equation}
\end{lemma}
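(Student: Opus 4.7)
The plan is to derive this as a standard Chernoff-style tail bound for sums of independent sub-Gaussian variables, with the $\log(2)$ term arising from a two-sided union bound. First I would recall the defining property of a sub-$\sigma$-Gaussian random variable $\epsilon$: for all $\lambda \in \mathbb{R}$, $\mathbb{E}[\exp(\lambda \epsilon)] \leq \exp(\lambda^2 \sigma^2 / 2)$. By independence of the $\epsilon_z$, the moment generating function of the sum factorizes, giving $\mathbb{E}[\exp(\lambda \sum_z \epsilon_z)] \leq \exp(n \lambda^2 \sigma^2 / 2)$, so $\sum_z \epsilon_z$ is itself sub-$\sigma\sqrt{n}$-Gaussian.

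Next I would apply Markov's inequality to $\exp(\lambda \sum_z \epsilon_z)$ for any $\lambda > 0$ to obtain
\begin{equation*}
\mathbb{P}\bigl(\tfrac{1}{n}\sum_{z=1}^n \epsilon_z > \beta\bigr) \;=\; \mathbb{P}\bigl(\textstyle\sum_z \epsilon_z > n\beta\bigr) \;\leq\; \exp(-\lambda n \beta) \cdot \exp(n \lambda^2 \sigma^2 / 2).
\end{equation*}
Optimizing the RHS by choosing $\lambda = \beta / \sigma^2$ yields the single-sided bound $\exp(-n\beta^2 / (2\sigma^2))$. The same argument applied to $-\epsilon_z$ (which is also sub-$\sigma$-Gaussian by symmetry of the MGF condition) gives the matching lower tail bound.

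Finally, I would combine the two tails by a union bound:
\begin{equation*}
\mathbb{P}\bigl(\tfrac{1}{n}|\textstyle\sum_{z=1}^n \epsilon_z| > \beta\bigr) \;\leq\; 2 \exp\!\bigl(-\tfrac{n\beta^2}{2\sigma^2}\bigr) \;=\; \exp\!\bigl(\log(2) - \tfrac{n\beta^2}{2\sigma^2}\bigr),
\end{equation*}
which is the claimed inequality. There is no real obstacle here; this is a textbook calculation, and the only mild subtlety is being careful that the sub-Gaussian MGF bound holds for both positive and negative $\lambda$ so that the two-sided union bound is justified.
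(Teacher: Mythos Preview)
Your proposal is correct; this is precisely the standard Chernoff/MGF argument for sub-Gaussian concentration, with the $\log(2)$ arising from the two-sided union bound. The paper itself does not supply a proof of this lemma---it is stated as a standard fact and immediately applied---so there is no alternative approach to compare against.
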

In particular, we may use \textbf{Lemma~\ref{lemma:reward_fn_concentration}} to say that for any $x \in \mathcal{X}$:
\begin{align}
    &\mathbb{P} \bigg( 
    \frac{1}{n_t(x)}| \sum_{z=1}^{n_t(x)} 
    \hat{R}_{i, z}(x) - \overline{R}_i^*(x)| 
    > \sqrt{\frac{\sigma^2 2 \log(\frac{2}{\delta'})}{n_t(x)}} \bigg) \leq \delta'
\end{align} 
Where the sub $\sigma$-gaussian random variable $\hat{R}_{i, z}$ represents the empirical value of the $i$th component of the reward function at the $z$th time the pair $x = (s,a)$ was observed before time $t$. Recall that the true mean of the $i$th reward component $\overline{R}_i^*(x)$ is a fixed scalar value. Now for each component of the factored reward function $i=1 \dots l$, define the sequence $d_{t_k}^{R_i} = \sigma^2 2 \log (2 / \delta'_{k,i})$, where $\delta'_{k,i} = \delta / (2 l | \mathcal{X}[Z_i^R]| k^2)$. With the same union bound as \eqref{eq:transition_fn_union_bound} over all confidence set events over all time steps $k$, we have that:
\begin{align}
    \label{eq:reward_fn_union_bound}
    \bigcup_{i=1}^l \bigcup_{k=1}^\infty \mathbb{P}(\overline{R}_i^* \notin \mathcal{R}_t^i (d_{t_k}^{R_i})) \leq \sum_{i=1}^l \sum_{k=1}^\infty \delta_{k, i} \leq \delta.
\end{align}
Combining \eqref{eq:transition_fn_union_bound} and \eqref{eq:reward_fn_union_bound}, we have that:
\begin{equation}
    \mathbb{P}\bigg(M^* \in \mathcal{M}_k\ \forall k \in \mathbb{N} \bigg) \geq 1 - 2\delta.
\end{equation}

\subsection{Aside: Technical Error in Osband}
\label{appendix:osband_error}
We point out a minor technical error in \cite{osband} which changes the analysis and simplification of the regret. In their \textbf{Section~7.2}, they claim that they may use $\epsilon = \sqrt{\frac{2 |\mathcal{S}_j|}{n_t(x)} \log(\frac{2}{\delta'})}$ with their \textbf{Lemma~2} (our \textbf{Lemma~\ref{lemma:trans_fn_concentration}}) to obtain the following: for any $x \in \mathcal{X}$ $\mathbb{P}\bigg(\|P_j^*(x) - \hat{P}_{j_t}(x) \|_1 \geq \epsilon \bigg) \leq \delta'$. Plugging their choice of $\epsilon$ into \textbf{Lemma~\ref{lemma:trans_fn_concentration}}, we get the following.
\begin{align}
    \mathbb{P}\bigg(\|P_j^*(x) - \hat{P}_{j_t}(x) \|_1 \geq \epsilon \bigg) 
    &\leq \exp \bigg( 
    |\mathcal{Y}| \log(2) 
    - \frac{n_t(x) \epsilon^2}{2} \bigg) \\
    &= \exp \bigg( 
    |\mathcal{S}_j| \log(2) 
    - \frac{n_t(x) \frac{2 |\mathcal{S}_j|}{n_t(x)} \log(\frac{2}{\delta'})}{2} \bigg)\\
    &= \exp \bigg(
    |\mathcal{S}_j| \log(2) 
    - |\mathcal{S}_j| \log(\frac{2}{\delta'})
    \bigg) \\
    &= \exp \bigg(
    |\mathcal{S}_j| \log(\delta') \bigg)
\end{align}
In \cite{osband}, $|\mathcal{S}_j| \in \mathbb{N}$ is the size of the scope for the $j$th transition function. 
In general, $|\mathcal{S}_j| > 1$ implies $\exp \bigg(|\mathcal{S}_j| \log(\delta') \bigg) > \delta'$. 
Therefore, they are \textit{assuming more tightness} than they should with their empirical estimates of the transition functions. 
They subsequently use $d_{t_k}^{P_j} = 2 |\mathcal{S}_j| \log(\frac{2}{\delta'_{k,j}})$ as their increasing sequence, which incorrectly assumes the result above. 

If we wish to end up with $\delta'$, we can solve for the correct $\epsilon$ as follows. 
\begin{align}
    \delta' &= \exp\bigg(|\mathcal{S}_j| \log(2) - \frac{n_t(x)\epsilon^2}{2}\bigg) \\
    \log(\delta') &= |\mathcal{S}_j| \log(2) - \frac{n_t(x)\epsilon^2}{2} \\
    \epsilon &= \sqrt{\frac{ 2|\mathcal{S}_j| \log(2) - 2\log(\delta')}{n_t(x)}}
\end{align}
Now we let $d_{t_k}^{P_j} = 2|\mathcal{S}_j| \log(2) - 2\log(\delta'_{k, j})$, where $\delta'_{k, j} = \delta / (2m |\mathcal{X}[Z_j^P]| k^2)$ which is the same $\delta'_{k,j}$ value as from \cite{osband}. Therefore as $k$ increases, so does $d_{t_k}^{P_j}$, and we still have the increasing sequence required for applications of \textbf{Corollary~\ref{corr:bounded_widths}} from \cite{osband}.

\subsection{Corollary from \cite{osband}}
\begin{corollary}
\label{corr:bounded_widths}
For all finite sets $\mathcal{X}$, measurable spaces $(\mathcal{Y}, \Sigma_{\mathcal{Y}})$, function classes $\mathcal{F} \subseteq \mathcal{M}_{\mathcal{X}, \mathcal{Y}}$ with uniformly bounded widths $w_\mathcal{F} \leq C_\mathcal{F}\ \forall x \in \mathcal{X}$ and non-decreasing sequences $\{ d_t\ :\ t \in \mathcal{N} \}$:
\begin{equation}
    \sum_{k=1}^T w_{\mathcal{F}_k} (x_{t_k + 1}) \leq 4 ( \tau C_\mathcal{F} |\mathcal{X}| + 1 ) + 4 \sqrt{2 d_T |\mathcal{X}| T},
\end{equation}
where $x_{t_k+1}$ is the first $x \in \mathcal{X}$ for episode $k$.
\end{corollary}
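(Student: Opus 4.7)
The plan is to reduce the statement to a per-$x$ counting argument and then aggregate across $\mathcal{X}$ via Cauchy-Schwarz. The starting point is the definition of the confidence set $\mathcal{F}_t$: any two $f_1, f_2 \in \mathcal{F}_k$ lie within $\sqrt{d_{t_k}/n_{t_k}(x)}$ of the empirical estimate $\hat{f}_{t_k}(x)$ when $n_{t_k}(x) \geq 1$, so by the triangle inequality the width satisfies $w_{\mathcal{F}_k}(x) \leq 2\sqrt{d_{t_k}/n_{t_k}(x)} \leq 2\sqrt{d_T/n_{t_k}(x)}$ using the monotonicity of $\{d_t\}$. Combined with the uniform bound $w_{\mathcal{F}_k}(x) \leq C_{\mathcal{F}}$, this yields the envelope $w_{\mathcal{F}_k}(x) \leq \min\{C_{\mathcal{F}},\ 2\sqrt{d_T/n_{t_k}(x)}\}$, where the ratio is interpreted as $+\infty$ when $n_{t_k}(x) = 0$.

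For each $x \in \mathcal{X}$, let $K_T(x) := \{k \in [T] : x_{t_k+1} = x\}$ and order its elements as $k_1 < k_2 < \cdots$; write $m_i := n_{t_{k_i}}(x)$. The crucial monotonicity observation is that $m_{i+1} \geq m_i + 1$: between two consecutive episodes in $K_T(x)$ the state-action $x$ is observed at least once (namely, at the start of episode $k_i$), so $m_i \geq i - 1$. I would then split $K_T(x)$ into an \emph{early} regime $\{i : m_i \leq \tau\}$ and a \emph{late} regime $\{i : m_i > \tau\}$. In the early regime the bound $m_i \geq i - 1$ forces $|\{i : m_i \leq \tau\}| \leq \tau + 1$, and each such term contributes at most $C_{\mathcal{F}}$, so summing over $x$ gives at most $(\tau+1)\,|\mathcal{X}|\,C_{\mathcal{F}}$. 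In the late regime I apply the width bound $2\sqrt{d_T/m_i}$ together with $m_i \geq i-1$ and the integral estimate $\sum_{j=1}^{N} 1/\sqrt{j} \leq 2\sqrt{N}$, yielding $\sum_{i \in L(x)} 2\sqrt{d_T/m_i} \leq 4\sqrt{d_T\,|K_T(x)|}$.

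Aggregating over $x$, Cauchy-Schwarz and the identity $\sum_{x}|K_T(x)| = T$ give
\begin{equation*}
\sum_{x \in \mathcal{X}} 4\sqrt{d_T\,|K_T(x)|}\ \leq\ 4\sqrt{d_T\,|\mathcal{X}|}\,\sqrt{\textstyle\sum_x |K_T(x)|}\ =\ 4\sqrt{d_T\,|\mathcal{X}|\,T},
\end{equation*}
so assembling the early and late pieces yields $(\tau+1)|\mathcal{X}|C_{\mathcal{F}} + 4\sqrt{d_T |\mathcal{X}| T}$, which matches the stated inequality up to constants. The main obstacle is purely bookkeeping: the precise form $4(\tau C_{\mathcal{F}}|\mathcal{X}|+1) + 4\sqrt{2\,d_T|\mathcal{X}|T}$ requires either a slightly shifted early/late threshold or a sharper version of the integral bound (for instance $\sum_{j=1}^N 1/\sqrt{j} \leq \sqrt{2N}$ combined with absorbing boundary terms into the $+1$). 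A subtle point worth highlighting is that $n_{t_k}(x)$ may \emph{jump} by as much as $\tau$ between consecutive elements of $K_T(x)$ (since $x$ can be revisited many times inside a single episode), which is exactly why the naive bookkeeping ``$m_i = i$'' fails and the weaker $m_i \geq i-1$ is what we must use; this is precisely where the factor of $\tau$ in the first term originates.
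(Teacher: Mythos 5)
The paper does not actually prove this corollary: it is imported verbatim from \citet{osband} (their Lemma~1/Corollary), so the relevant comparison is with that source's argument rather than anything in this manuscript. Your reconstruction --- envelope $w_{\mathcal{F}_k}(x)\leq\min\{C_{\mathcal{F}},\,2\sqrt{d_T/n_{t_k}(x)}\}$, a per-$x$ pigeonhole on the visit counts, and Cauchy--Schwarz across $\mathcal{X}$ --- is the standard argument and is sound for the statement as literally written (sum over the first step of each episode only); the constant bookkeeping you defer is genuinely routine, e.g.\ shifting the early/late threshold to $i\leq\tau+2$ so that every late index has $m_i\geq i-1\geq 2$ makes both terms fit inside $4(\tau C_{\mathcal{F}}|\mathcal{X}|+1)+4\sqrt{2d_T|\mathcal{X}|T}$ with room to spare.

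Two caveats are worth flagging. First, your diagnosis of where the factor $\tau$ comes from is backwards: upward jumps in $m_i$ between consecutive elements of $K_T(x)$ only \emph{shrink} the widths, so they cost nothing. In the version of the bound that the paper actually invokes in its regret analysis (terms \circled{3} and \circled{4} sum over all $\tau$ steps of every episode, not just the first step), the $\tau$ arises because the confidence set, and hence $n_{t_k}(x)$, is frozen for the entire episode: a single $x$ can be visited up to $\tau$ times within one episode all at the same (possibly zero) count, so the ``early'' bucket per $x$ costs up to order $\tau C_{\mathcal{F}}$ rather than $C_{\mathcal{F}}$. Handling this requires the Jaksch-style split on episodes where the within-episode visit count $\nu_k(x)$ exceeds $n_{t_k}(x)$, which is exactly how \citet{osband} prove it. Second, and relatedly, your proof establishes the literal single-step-per-episode statement, for which no $\tau$ is needed at all (one unvisited term per $x$ suffices, giving $C_{\mathcal{F}}|\mathcal{X}|$); to justify the applications later in the appendix you would need to upgrade the argument to the double-sum form. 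Neither issue invalidates what you wrote, but the second means your proof does not yet cover the statement as it is actually used.
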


\subsection{Regret Bound}
\label{appx:full_regret_thm}
We can now analyze the regret bounds for our algorithm.
\begin{align*}
    &\mathrel{\phantom{=}} \text{Regret}(T, \pi_\tau ,M^*) 
    = \sum_{k=1}^{\lceil T / \tau \rceil} \Delta_k 
    = \sum_{k=1}^{\lceil T / \tau \rceil} \bigg[ 
    \bigg( 
    V_{k,1}^k(s_{t_k + 1}) - V_{k,1}^*(s_{t_k + 1})
    \bigg)
    +
    \bigg(
    V_{*,1}^*(s_{t_k + 1}) - V_{k,1}^k(s_{t_k + 1})
    \bigg)
    \bigg] \\
    &\leq \underbrace{\sum_{k=1}^{\lceil T / \tau \rceil} \bigg[
    \sqrt{1 / k}
    \bigg]
    }_{\circled{1}}
    +
    \underbrace{
    \| \mathbf{w} \|_1
    \max_j 
    \max_{s \in \text{Val}(Z_j^h)} |h_j (s)| \sqrt{2 T \log( 2 / \delta )}}_{\circled{2}} \\
    &+ \underbrace{
    \sum_{k=1}^{\lceil T / \tau \rceil} 
    \sum_{\ell=1}^{\tau}
    \sum_{i=1}^l |\overline{R}_i^k(x_{k,\ell}[Z_i^R]) - \overline{R}_i^*(x_{k,\ell}[Z_i^R])|
    }_{\circled{3}} \\
    &+ \underbrace{
    \sum_{k=1}^{\lceil T / \tau \rceil}
    \sum_{\ell=1}^\tau
    ||\mathbf{w}_{k}^k ||_1 \max_j 
    \bigg[ \max_{s' \in \text{Val}(Z_j^h)} 
    \bigg(|h_{j}(s')|\bigg) 
    \| P^{k,(\ell)} (\cdot | s_{t_k + \ell}[\text{Pa}(Z_j^h)], a_{k, \ell}) -  P^* (\cdot | s_{t_k + \ell}[\text{Pa}(Z_j^h)], a_{k, \ell})) \|_1 
    \bigg]
    }_{\circled{4}}
\end{align*}
With probability at least $1- \delta$ (PAC regret bound), and where $\circled{1}$ is the planning oracle error contribution, $\circled{2}$ is the contribution of the bounded martingale (\textbf{Lemma~\ref{lemma:martingale}}) over all episodes with the Azuma-Hoeffding inequality from \eqref{eq:azuma_application}, $\circled{3}$ is the contribution of the reward functions in the one step Bellman error, and $\circled{4}$ is contribution from the marginal transition functions from \eqref{eq:transition_bound}. 
We begin by bounding $\circled{1} \leq 2 \sqrt{\lceil T / \tau \rceil}$ by integral sum bound. 
Next, let $\max_j \max_{s \in \text{Val}(Z_j^h)} |h_j (s)| \leq G$ be some global bound on all the basis functions which must exist as the value function is bounded over a finite set. Then we can say: $\circled{2} \leq \|\mathbf{w}\|_1 G \sqrt{2 T \log (2/\delta)}$.

Henceforth, let $\lceil T / \tau \rceil = K$ be the number of true episodes. For $\circled{3}$, we apply \textbf{Corollary}~\ref{corr:bounded_widths} and plug in $C_\mathcal{F} = C$ as a width bound of each reward confidence set and $d_T^{R_i}$ as our sequence:
\begin{align}
    \circled{3} 
    &= \sum_{k=1}^K \sum_{\ell=1}^\tau \sum_{i=1}^l 
    |\overline{R}_i^k(x_{k,\ell}[Z_i^R]) - \overline{R}_i^*(x_{k,\ell}[Z_i^R])| \\
    &= \sum_{i=1}^l \bigg[ 
    4(\tau C |\mathcal{X}[Z_i^R]| + 1) 
    + 4 \sqrt{2 \sigma^2 2 \log(2 / (\delta / 2l|\mathcal{X}[Z_i^R]|T^2)) |\mathcal{X}[Z_i^R]| T} 
    \bigg] \\
    &\leq 
    \sum_{i=1}^l \bigg[ 
    5 \tau C |\mathcal{X}[Z_i^R]|
    + 8\sigma \sqrt{ |\mathcal{X}[Z_i^R]| T \log(4 l|\mathcal{X}[Z_i^R]|T^2 / \delta) }
    \bigg]
\end{align}

We can bound the confidence sets of $\circled{4}$ by again applying \textbf{Corollary~\ref{corr:bounded_widths}}.
\begin{align}
    \label{eq:max_j}
    \circled{4} &\leq 
    \|\mathbf{w}\|_1 G
    \sum_{k=1}^K
    \sum_{\ell=1}^\tau
    \max_j 
    \bigg[ 
    \| P^{k,(\ell)} (\cdot | s_{t_k + \ell}[\text{Pa}(Z_j^h)], a_{k, \ell}) -  P^* (\cdot | s_{t_k + \ell}[\text{Pa}(Z_j^h)], a_{k, \ell})) \|_1 
    \bigg] \\
    &\leq \label{eq:sum_j}
    \|\mathbf{w}\|_1 G
    \sum_{k=1}^K
    \sum_{\ell=1}^\tau
    \sum_{j=1}^\phi
    \bigg[ 
    \| P^{k,(\ell)} (\cdot | s_{t_k + \ell}[\text{Pa}(Z_j^h)], a_{k, \ell}) -  P^* (\cdot | s_{t_k + \ell}[\text{Pa}(Z_j^h)], a_{k, \ell})) \|_1 
    \bigg] \\
    &\leq
    \|\mathbf{w}\|_1 G
    \sum_{j=1}^\phi
    \bigg[ 
    4(\tau C_\mathcal{F} |\text{Val}[Z_j^h]| + 1) + 4\sqrt{2| \mathcal{X}[\text{Pa}(Z_j^h)]| T d_T^{P_j} }
    \bigg] \\
    &\leq \|\mathbf{w}\|_1 G
    \sum_{j=1}^\phi
    \bigg[ 
    5 \tau |\text{Val}[Z_j^h]| + 4
    \sqrt{4
    | \mathcal{X}[\text{Pa}(Z_j^h)]| T
    [ |\text{Val}(Z_j^h)|\log(2) - \log(\delta / (2 N |\text{Pa}[Z_j^h]| T^2 ))]
    }
    \bigg]
\end{align}
Where $\phi$ is the number of basis functions, and $d_T^{P_j} =  2 |\text{Val}(Z_j^h)|\log(2) - 2 \log(\delta / (2 N |\text{Pa}[Z_j^h]| T^2 ))$ from our union bound.

\begin{remark}
Note that from \eqref{eq:max_j} to \eqref{eq:sum_j} we do not have a dependence on the number of confidence sets $N$ because we are conditioning on historical state action observations, with respect to individual basis function scopes $Z_j^h$.
\end{remark}

\begin{theorem}
\label{appx_thm:main_regret_bound}
Let $M^*$ be an MDP with our special factored structure as well as an exactly linear factored optimal value function, and an efficient variable elimination ordering $\mathcal{O}$ be given. Using our procedure, we can bound the regret over $T$ iterations ($K$ episodes) for any $M^*$, $\text{Regret}(T, \pi_{\tau}, M^*)$
\begin{align}
    &\leq
    2 \sqrt{K} 
    +  
    \|\mathbf{w}\|_1 G \sqrt{2 T \log (2/\delta)} 
    + 
    \sum_{i=1}^l \bigg[ 
    5 \tau C |\mathcal{X}[Z_i^R]|
    + 8\sigma \sqrt{ |\mathcal{X}[Z_i^R]| T \log(4 l|\mathcal{X}[Z_i^R]|T^2 / \delta) }
    \bigg] \\ 
    &+ 
    \|\mathbf{w}\|_1 G
    \sum_{j=1}^\phi
    \bigg[ 
    5 \tau |\text{Val}[Z_j^h]| + 4
    \sqrt{4
    | \mathcal{X}[\text{Pa}(Z_j^h)]| T
    [ |\text{Val}(Z_j^h)|\log(2) - \log(\delta / (2 N |\text{Pa}[Z_j^h]| T^2 ))]
    }
    \bigg] \label{eq:finalbound1}
\end{align}
with probability at least $1 - \delta$.
\end{theorem}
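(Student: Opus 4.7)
The plan is to invoke the term-by-term regret decomposition already established as Theorem~\ref{appx_thm:main_regret_bound} in Appendix~\ref{appx:regret_analysis} as a black box, and then show that under the symmetric parameter assumptions of the present theorem the four individual contributions collapse into the single expression $30\phi WG\sqrt{TJ(J\log 2 + \log(2N\zeta T^2/\delta))}$ per unit of $\tau$. The probability $1-3\delta$ comes from a union bound over three failure events: the reward confidence sets missing the truth (probability $\leq \delta$), the transition confidence sets missing the truth (probability $\leq \delta$), both handled by the concentration analysis and the union bounds \eqref{eq:reward_fn_union_bound} and \eqref{eq:transition_fn_union_bound}, and the Azuma--Hoeffding bound on the martingale-difference sum \eqref{eq:azuma_application} (probability $\leq \delta$).

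Next, I would plug in the uniform substitutions $C=\sigma=1$ and $|\mathcal{X}[Z_i^R]|=|\mathrm{Val}(Z_j^h)|=|\mathcal{X}[\mathrm{Pa}(Z_j^h)]|=\kappa^\zeta=J$, so that the reward contribution becomes $\leq l\bigl(5\tau J + 8\sqrt{JT\log(4lJT^2/\delta)}\bigr)$, the transition contribution becomes $\leq WG\,\phi\bigl(5\tau J + 8\sqrt{JT(J\log 2 + \log(2N\zeta T^2/\delta))}\bigr)$, the planning contribution is $2\sqrt{T/\tau}$, and the martingale contribution is $WG\sqrt{2T\log(2/\delta)}$. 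Since $N \geq l$, the logarithm $\log(4lJT^2/\delta)$ in the reward square-root is dominated by $J\log 2 + \log(2N\zeta T^2/\delta)$, so the reward square-root is absorbed into the transition square-root after using $l+1\leq\phi$ and $WG\geq 1$ to upgrade the coefficient $l$ to $\phi WG$.

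The remaining consolidation step absorbs every sub-dominant piece into a single copy of $\phi WG\sqrt{TJ(J\log 2 + \log(2N\zeta T^2/\delta))}$. The linear-in-$\tau$ residuals $5\tau J(l+\phi WG) \leq 10\tau\phi WG J$ are absorbed via $J \leq \sqrt{TJ(J\log 2 + \log(\cdot))}$ (valid since $T\geq 1$ and $J\log 2\geq 1$), at the cost of an extra factor of $\tau$ that matches the $\tau$ already multiplied outside the parentheses. The martingale contribution is absorbed because $WG\sqrt{2\log(2/\delta)} \leq WG\sqrt{J\log 2 + \log(2N\zeta T^2/\delta)}$ and $\sqrt{T}\leq\sqrt{TJ}$, and the planning term $2\sqrt{T/\tau}$ is trivially dominated by the same quantity. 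Rolling the numerical constants $\{2,5,8,10\}$ into a single constant yields $30$, and pulling out the explicit factor of $\tau$ gives the stated form.

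The main obstacle is not any single inequality but the careful bookkeeping to guarantee that each sub-dominant piece (the planning-oracle $\sqrt{T/\tau}$, the Azuma martingale tail, the $5\tau J$ linear residuals from Corollary~\ref{corr:bounded_widths}, and the slightly different logarithmic constant appearing in the reward confidence widths) is actually absorbed in the right direction by the assumptions $WG\geq 1$, $l+1\leq\phi$, and the symmetry of $\zeta$ across reward and transition scopes. These assumptions were included in the theorem statement precisely because each is invoked at one of these absorption steps; verifying that no additional factor of $\phi$, $W$, $G$, or $J$ sneaks in through a missed term is what the proof must establish with care.
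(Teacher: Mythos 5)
There is a fundamental mismatch here: you have written a proof of the \emph{wrong statement}. The theorem you were asked to prove is Theorem~\ref{appx_thm:main_regret_bound} itself---the full, un-simplified four-term regret bound---yet your very first step is to ``invoke the term-by-term regret decomposition already established as Theorem~\ref{appx_thm:main_regret_bound} \ldots as a black box.'' That is circular. Everything that follows is a (reasonably faithful) reconstruction of the paper's proof of the \emph{simplified} bound, Theorem~\ref{corr:regret_bound}: the substitutions $C=\sigma=1$, $|\mathcal{X}[Z_i^R]|=J$, the absorption of the reward square root into the transition square root via $l+1\le\phi$ and $WG\ge 1$, the comparison $2\zeta N \ge 4\phi J$ inside the logarithms, and the consolidation of constants into $30$. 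None of that establishes the statement at hand, whose content is precisely the decomposition you are taking for granted.

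What the actual proof requires---and what is entirely absent from your proposal---is the following chain. First, decompose $\Delta_k = (V^k_{k,1}-V^*_{k,1}) + (V^*_{*,1}-V^k_{k,1})$ and bound the second term by the planning accuracy $\sqrt{1/k}$ using optimism of the relaxed $R$, $P$ (summing to the $2\sqrt{K}$ term, circled 1). Second, telescope the first term through the Bellman operators $\mathcal{T}^k_{k,\ell}$, $\mathcal{T}^*_{k,\ell}$ to obtain the one-step Bellman errors plus a martingale difference sequence $d_{t_k+\ell}$ (Eq.~\eqref{eq:applying_dp}); the tightness of the LP constraints at the optimum is what justifies $V^k_{k,\ell}=\mathcal{T}^k_{k,\ell}V^k_{k,\ell+1}$. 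Third, bound $|d_{t_k+\ell}|$ by $\|\mathbf{w}\|_1 G$ via H\"older's inequality applied to the linear form of $V^k_{k,\ell+1}$ (Lemma~\ref{lemma:martingale}, replacing the diameter dependence of \citet{osband}) and apply Azuma--Hoeffding to get the second term. Fourth, split the one-step Bellman error into reward and transition pieces via the triangle inequality and H\"older (Eqs.~\eqref{eq:T_k_i_T_star_k_i}--\eqref{eq:P_fraction_bounds}), and sum the resulting confidence-set widths over all episodes using Corollary~\ref{corr:bounded_widths} with the sequences $d_T^{R_i}$ and $d_T^{P_j}$---this is where the $5\tau C|\mathcal{X}[Z_i^R]|$ and $8\sigma\sqrt{\cdots}$ terms and the corresponding transition terms come from. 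Finally, the union bounds \eqref{eq:reward_fn_union_bound} and \eqref{eq:transition_fn_union_bound} guarantee $M^*\in\mathcal{M}_k$ for all $k$ so that the widths actually dominate the estimation errors. Your proposal contains none of these steps, so it does not prove the stated theorem.
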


We will simplify the bound in the symmetric case similar to \cite{osband} to present our result from the main paper.

\textbf{Theorem~\ref{corr:regret_bound}.}~
Let $l + 1 \leq \phi$, $C = \sigma = 1$, $|\mathcal{S}_i| = |\mathcal{X}_i| = \kappa$, $|Z_i^R| = |\text{Pa}(Z_i^h)| = \zeta$ for all $i$, and let $J = \kappa^\zeta$, and $\| \mathbf{w} \|_1 \leq W$. Then we have that:
\begin{equation}
    \text{Regret}(T, \pi_\tau, M^*) 
    \leq 30\phi \tau WG \sqrt{T J (J\log(2) + \log(2N\zeta T^2 / \delta))} \label{eq:finalbound2}
\end{equation}
with probability at least $1 - 3\delta$.

\begin{proof}
Assume $WG \geq 1$, then by \textbf{Thm.}~\ref{appx_thm:main_regret_bound} we have the following.
\begin{align}
    \text{Regret}(T, \pi_\tau, M^*)
    &\leq 
    2\sqrt{K}
        + WG\sqrt{2 T \log(2 / \delta)}
        + \phi \bigg[ 
            5 \tau J + 8\sqrt{J T \log(4 \phi J T^2 / \delta)}
        \bigg] \\
        &\quad \quad + W G \phi 
        \bigg[
            5 \tau J + 4\sqrt{4 J T (J \log(2) - \log(\delta / 2 N \zeta T^2))}
        \bigg]\\
    &\leq 
    \bigg(
        \phi 5 \tau J (1 + WG) 
        + \sqrt{T} \bigg[
            2 + WG \sqrt{2\log(2 / \delta)} \\
            &\quad \quad + \phi 8 \sqrt{J\log(4\phi JT^2 / \delta)}
            + WG\phi 8 \sqrt{J^2 \log(2) + J \log(2N\zeta T^2 / \delta)}
        \bigg]
\end{align}
To combine the two rightmost square root terms, we compare the terms inside the logarithms:
\begin{align}
    2\zeta N &\geq 4\phi J \\
    2 \zeta |\mathcal{A}| \sum_{j=1}^\phi |\text{Val(Pa}[Z_j^h])| = 2\zeta |\mathcal{A}| \phi J &\geq 4 \phi J \\
    |\mathcal{A}| &\geq \frac{2}{\zeta}
\end{align}
Which is true for any non-trivial MDP with more than a single action. Therefore:
\begin{align}
    &\leq
        10\phi J WG \tau + \sqrt{T}
        \bigg[
            2 + WG\sqrt{2\log(2/\delta)}
            + 16 \phi WG \sqrt{J^2 \log(2) + J \log(2 N \zeta T^2 / \delta)}
        \bigg] \\
    &\leq 
        10\phi J WG \tau + 18\phi WG \sqrt{T(J^2 \log(2) + J\log(2N\zeta T^2/\delta))} \\
    &\leq 
        10\phi WG \tau \sqrt{TJ^2} + 18\phi \tau WG \sqrt{TJ (J \log(2) + \log(2N\zeta T^2/\delta))} \\
    &\leq
        30\phi \tau WG \sqrt{T J (J\log(2) + \log(2N\zeta T^2 / \delta))}
\end{align}
\end{proof}

\end{document}